\newcolumntype{S}{>{\centering\arraybackslash} m{0.8cm}}
\newcolumntype{F}{>{\centering\arraybackslash} m{.2\linewidth}}
\providecommand{\tabularnewline}{\\}
\begin{document}

\title{Orientation
Determination of Cryo-EM images Using Least Unsquared Deviations}

\author{Lanhui Wang%
\thanks{The Program in Applied and Computational Mathematics (PACM), Princeton University, Fine Hall, Washington Road, Princeton, NJ 08544-1000, USA, \texttt{lanhuiw@math.princeton.edu}, Corresponding author. Tel.: +1 609 258 5785; fax: +1 609 258 1735. }
\and Amit Singer%
\thanks{Department of Mathematics and PACM, Princeton University, Fine Hall, Washington Road, Princeton, NJ 08544-1000, USA, \texttt{amits@math.princeton.edu} }
\and Zaiwen Wen%
\thanks{Department of Mathematics, MOE-LSC  and Institute of Natural Sciences,
Shanghai Jiao Tong University, Pao Yue-Kong Library, 800 Dongchuan Rd,
Shanghai, China, \texttt{zw2109@sjtu.edu.cn} } }

\date{}
\maketitle

\begin{abstract}
A major challenge in single particle reconstruction from cryo-electron microscopy is to
establish a reliable ab-initio three-dimensional model using two-dimensional projection images with
unknown orientations. Common-lines based methods
estimate the orientations without additional geometric
information. However, such methods fail when the detection rate of common-lines
is too low due to the high level of noise in the images. An approximation to the
least squares global self consistency error was obtained in \cite{Amit_eig_sdp}
using convex relaxation by semidefinite programming. In this paper we introduce
a more robust global self consistency error and show that the corresponding
optimization problem can be solved via semidefinite relaxation. In order to
prevent artificial clustering of the estimated viewing directions, we further
introduce a spectral norm term that is added as a constraint or as a
regularization term to the relaxed minimization problem. The resulted problems are solved by using either the alternating direction method of multipliers
or an iteratively reweighted least squares procedure.  
Numerical
  experiments with both simulated and real images demonstrate that the proposed methods significantly reduce the orientation
estimation error when the detection rate of common-lines is low.
\end{abstract}

\begin{keywords}
Angular reconstitution, cryo-electron microscopy, single particle reconstruction, common lines, least unsquared deviations, semidefinite relaxation, alternating direction method of multipliers, iteratively reweighted least squares
\end{keywords}

\begin{AM}
92E10, 68U10, 94A08, 92C55, 90C22, 90C25
\end{AM}
\section{Introduction}

In single particle analysis, cryo-electron microscopy (Cryo-EM) is
used to attain a resolution sufficient to interpret fine details in three-dimensional
(3D) macromolecular structures \cite{Frank1996, vanheel_cryo-em,ribo_cryo-em,Fred_resolution}. Cryo-EM is used to acquire 2D projection
images of thousands of individual, identical frozen-hydrated macromolecules
at random unknown orientations and positions. The collected images
are extremely noisy due to the limited electron dose used for imaging
to avoid excessive beam damage. In addition, the unknown orientational information of the imaged particles need to be estimated
for 3D reconstruction. An ab-initio estimation of the orientations of images using the random-conical tilt technique \cite{GWBP_radermacher} or common-lines based approaches \cite{VanHeel1987111, Amit_voting, Amit_eig_sdp} are often applied after multivariate statistical analysis \cite{Multivariatestat,vanHeel1981187} and classification techniques \cite{vanHeel1984165, GWBP_Penczek1992, Amit_classavg} that are used to sort and partition the large set of images by their viewing directions, producing ``class averages'' of enhanced signal-to-noise ratio (SNR). Using the ab-initio estimation of the orientations, a preliminary 3D map is reconstructed from the images by a 3D reconstruction algorithm. The initial model is then iteratively refined \cite{Refinement} in order to obtain a higher-resolution 3D reconstruction.
\begin{figure}

\begin{center}

\

\psfrag{text0}{Projection $P_i$}%

\psfrag{text1}{Projection $P_j$}%

\psfrag{text2}{$\hat{P}_i$}%

\psfrag{text3}{$\hat{P}_j$}%

\psfrag{text4}{3D Fourier space}%

\psfrag{text5}{3D Fourier space}%

\psfrag{(k0,l0)}[Bl][Bl][0.75]{$\vec{c}_{ij}=(c^1_{ij},c^2_{ij})$}%

\psfrag{(k1,l1)}[Bl][Bl][0.75]{$\vec{c}_{ji}=(c^1_{ji},c^2_{ji})$}%

\psfrag{Bk0l0}[Bl][Bl][0.75]{$R_{i}\left(\begin{array}{c}
\vec{c}_{ij}^T\\
0
\end{array}\right)$}%

\psfrag{Bk1l1}[Bl][Bl][0.75]{$R_{i}\left(\begin{array}{c}
\vec{c}_{ij}^T\\
0
\end{array}\right)=R_{j}\left(\begin{array}{c}
\vec{c}_{ji}^T\\
0
\end{array}\right)$}%

\includegraphics[width=0.5\paperwidth]{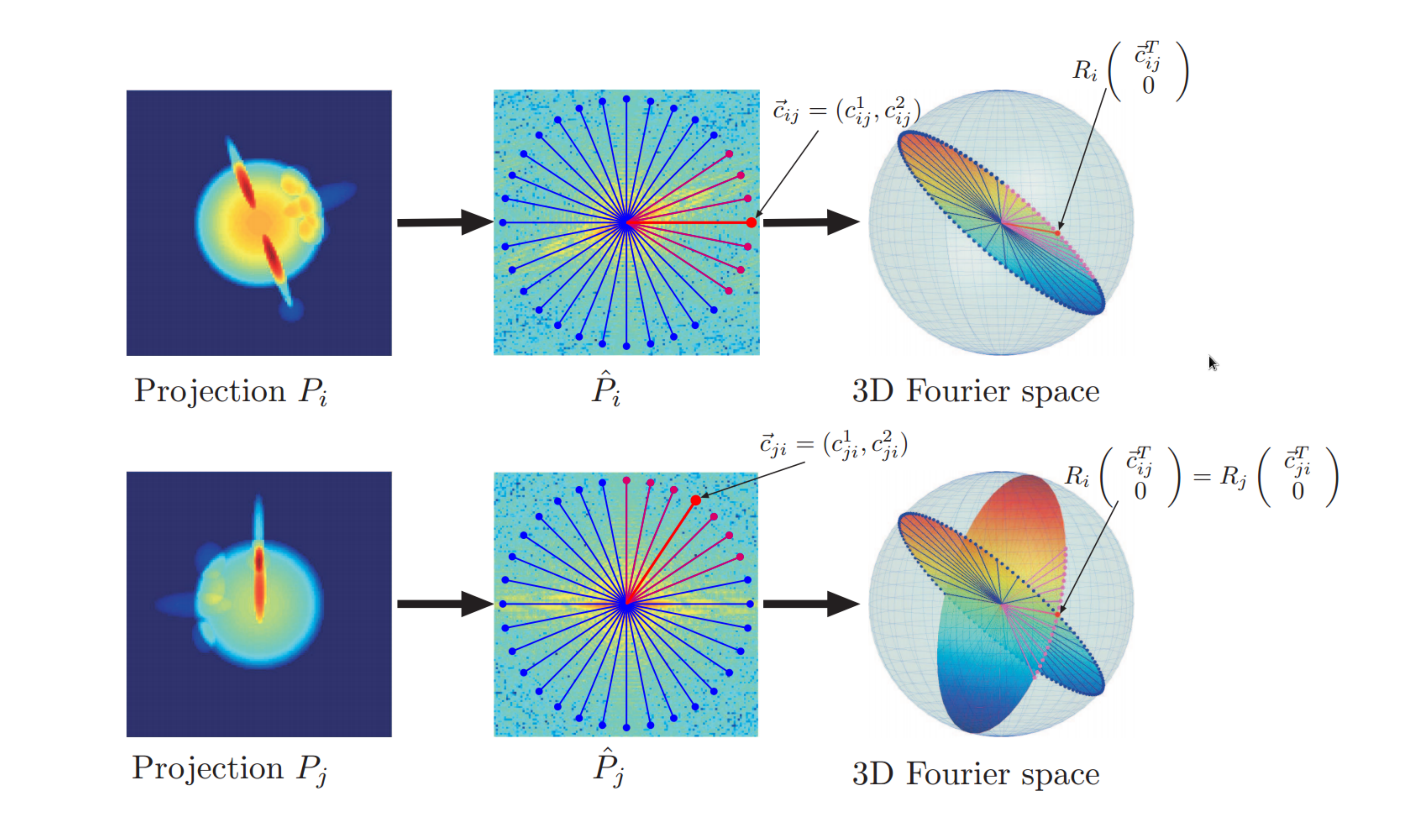}%

\caption{\label{fig:FPS-theorem} Fourier projection-slice theorem. In the middle, $\hat{P}_i$ is a polar Fourier transform of projection $P_i$ on the left. The red line $\vec {c}_{ij}$ represents the direction of a common-line between $\hat{P}_i$ and $\hat{P}_j$ on $\hat{P}_i$. On the right, the two transformed images $\hat{P}_i$ and $\hat{P}_j$ intersect with each other at the common-line after rotations $R_i$ and $R_j$, yielding the equation (\ref{eq:common-line-eq}). }

\end{center}

\end{figure}


The Fourier projection-slice theorem (see, e.g., \cite{natterer_fourier_proj_slice}) plays a fundamental role in the common-lines
based reconstruction methods. The theorem states that restricting the 3D Fourier transform of the volume to a planar central slice yields the Fourier transform
of a 2D projection of the volume in a direction perpendicular to the
slice (Figure \ref{fig:FPS-theorem}). Thus, any two projections imaged from non-parallel viewing directions
intersect at a line in Fourier space, which is called the common-line
between the two images. The common-lines between any three images
with linearly independent projection directions determine their relative orientation
up to handedness. This is the basis of the ``angular reconstitution''
technique of van Heel \cite{VanHeel1987111}, which was also developed independently by Vainshtein and Goncharov \cite{VainshteinGoncharov1986}. In this technique, the orientations of additional projections are determined in a sequential manner. Farrow and Ottensmeyer \cite{Farrow:92} used quaternions to obtain the relative orientation of a new projection in a least square sense. The main problem with such techniques is that
they are sensitive to false detection of common lines that leads to
the accumulation of errors. Penczek et.al. \cite{Penczek1996205} tried to obtain the rotations corresponding to all projections simultaneously by minimizing a global energy functional, which requires a brute force search in an exponentially large parametric space of all possible orientations for all projections. Mallick et. al. \cite{Mallick_structure} and Singer et al. \cite{Amit_voting}
applied Bayesian approaches to use common-lines information from different
groups of projections. Recently, Singer and Shkolnisky \cite{Amit_eig_sdp} developed
two algorithms based on eigenvectors and semidefinite programming for
estimating the orientations of all images. These two algorithms correspond to convex relaxations of the global self-consistency
error minimization, and can accurately estimate all orientations at
relatively low common-line detection rates.

When the signal-to-noise ratio (SNR) of the image is significantly low, the detected
common-lines consist of a modest number of noisy inliers, which are
explained well by the image orientations, along with a large number
of outliers, that have no structure. The standard common-lines based
methods, including those using least squares (LS) \cite{Farrow:92,Amit_eig_sdp}, are sensitive to
these outliers. In this paper we estimate the orientations using a different, more robust self consistency error, which is the sum of unsquared
residuals \cite{L1_Nyquist,L1_spath_watson}, rather than the sum of squared residuals of the LS formulation. Convex relaxations of least unsquared deviations (LUD) have been recently proposed for other applications, such as robust principal component analysis \cite{irls_t} and robust synchronization of orthogonal transformations \cite{LUD}. Under certain noise models for the distribution of the outliers (e.g., the haystack model of \cite{irls_t}), such convex relaxations enjoy proven guarantees for exact and stable recovery with high probability. Such theoretical and empirical improvements that LUD brings compared to LS serve as the main motivation to consider in this paper the application of LUD to the problem of orientation estimation from common-lines in single particle reconstruction.

The LUD minimization problem is solved here via semidefinite relaxation. When
the detection rate of common-lines is extremely low, the estimated viewing
directions of the projection images are observed to cluster together. This
artificial clustering can be explained by the fact that images that share the
same viewing direction also share more than one common line. In order to
mitigate this spurious clustering of estimated viewing directions, we add to the
minimization formulation a spectral norm term, either as a constraint or as
a regularization term. The resulting minimization problem is solved  by
the alternating direction method of multipliers (ADMM), which has been
proved to converge to the global minimizer in many cases \cite{HongLuo2012}. We also consider the application of the iteratively reweighted least squares (IRLS) procedure, which is not guaranteed to converge to the global minimizer, but performs well in our numerical experiments. We demonstrate that the ab-initio models resulted by our new methods are more accurate and require fewer refinement iterations compared to least squares based methods.

The paper is organized as follows: In Section \ref{sec:common} we review the
detection procedure of common lines between images. Section \ref{sec:LS}
presents the LS and LUD global self-consistency cost functions.  Section
\ref{sec:SDP} introduces the semidefinite relaxation and rounding procedure for
the LUD formulation. The additional spectral norm constraint is considered in
Section \ref{sec:spectral}. The ADMM method for obtaining the global minimizer is detailed in Section \ref{sec:adm}, and the IRLS procedure is described in Section \ref{sec:IRLS}. Numerical results for both simulated and real data are provided in Section \ref{sec:numerical_results}. Finally, Section \ref{sec:discussion} is a summary.

\section{Detection of common-lines between images}
\label{sec:common}

Typically, the first step for detecting common lines is to compute the 2D Fourier transform of each image on a polar grid using, e.g., the non-uniform fast Fourier transform (NUFFT) \cite{dutt:1368,nufft_fessler,greengard:443}. The transformed
images have resolution $n_{r}$ in the radial direction and resolution
$n_{\theta}$ in the angular direction, that is, the radial resolution
$n_{r}$ is the number of equi-spaced samples along each ray in the
radial direction, and the angular resolution $n_{\theta}$ is the
number of angularly equally-spaced Fourier rays computed for each image (Figure \ref{fig:FPS-theorem}).
For simplicity, we let $n_{\theta}$ be an even number. The transformed
images are denoted as $\left(\vec{l}_{0}^{k},\vec{l}_{1}^{k},\ldots,\vec{l}_{n_{\theta}-1}^{k}\right)$,
where $\vec{l}_{m}^{k}=\left(l_{m,1}^{k},l_{m,2}^{k},\ldots,l_{m,n_{r}}^{k}\right)$
is an $n_{r}$ dimensional vector, $m\in\left\{ 0,1,\ldots,n_{\theta}-1\right\} $
is the index of a ray, $k\in\left\{ 1,2,\ldots,K\right\} $ is the
index of an image and $K$ is the number of images. The DC term is
shared by all lines independently of the image, and is therefore excluded
for comparison. To determine the common line between two images $P_{i}$
and $P_{j}$, the similarity between all $n_{\theta}$ radial lines
$\vec{l}_{0}^{i},\vec{l}_{1}^{i},\ldots,\vec{l}_{n_{\theta}-1}^{i}$ from
the first image with all $n_{\theta}$ radial lines $\vec{l}_{0}^{j},\vec{l}_{1}^{j},\ldots,\vec{l}_{n_{\theta}-1}^{j}$ from
the second image are measured (overall $n_{\theta}^{2}$ comparisons),
and the pair of radial lines $\vec{l}_{m_{i,j}}^{i}$ and $\vec{l}_{m_{j,i}}^{j}$
with the highest similarity is declared as the common-line pair between
the two images. However, as a radial line is the complex conjugate
of its antipodal line, the similarity measure between $\vec{l}_{m_{1}}^{i}$
and $\vec{l}_{m_{2}}^{j}$ has the same value as that between their
antipodal lines $\vec{l}_{m_{1}+n_{\theta}/2}^{i}$ and $\vec{l}_{m_{2}+n_{\theta}/2}^{j}$
(where addition of indices is taken modulo $n_{\theta}$). Thus the
number of distinct similarity measures that need to be computed is
$n_{\theta}^{2}/2$ obtained by restricting the index $m_{1}$ to
take values between $0$ and $n_{\theta}/2$ and letting $m_{2}$
take any of the $n_{\theta}$ possibilities (see also \cite{VanHeel1987111}
and \cite{Penczek1994251}, p. 255). Equivalently, it is possible to
compare real valued 1D line projections of the 2D projection images,
instead of comparing radial Fourier lines that are complex valued.
According to the Fourier projection-slice theorem, each 1D projection
is obtained by the inverse Fourier transform of the corresponding Fourier radial line $\vec{l}_{m}^{k}$
and its antipodal line $\vec{l}_{m+n_{\theta}/2}^{k}$, and is denoted
as $\vec{s}_{m}^{k}$. The 1D projection lines of a cryo-EM image
can be displayed as a 2D image known as a ``sinogram''
(see  \cite{VanHeel1987111, Serysheva}).

Traditionally, the pair of radial lines (or sinogram lines) that has
the maximum normalized cross correlation is declared as the common
line, that is,
\begin{equation}
\left(m_{i,j},m_{j,i}\right)=\underset{{0 \leq m_{1} < n_{\theta}/2,\, 0 \leq m_{2} < n_{\theta}}}{\arg\max} \frac{\left\langle \vec{l}_{m_{1}}^{i},\vec{l}_{m_{2}}^{j}\right\rangle }{\left\Vert \vec{l}_{m_{1}}^{i}\right\Vert \left\Vert \vec{l}_{m_{2}}^{j}\right\Vert },\text{ for all }i\neq j,
\label{eq:commonline-estimate}
\end{equation}
where $m_{i,j}$ is a discrete estimate for where the $j$'th image
intersects with the $i$'th image. In practice, a weighted correlation,
which is equivalent to applying a combination of high-pass and low-pass
filters is used to determine proximity. As noted in \cite{VanHeel1987111},
the normalization is performed so that the correlation coefficient
becomes a more reliable measure of similarity between radial lines.
Note that even with clean images, this estimate will have a small
deviation from its ground truth (unknown) value due to discretization
errors. With noisy images, large deviations of the estimates from
their true values (say, errors of more than $10^{\circ}$) are frequent,
and their frequency increases with the level of noise. We refer to
common lines whose $m_{i,j}$ and $m_{j,i}$ values were estimated
accurately (up to a given discretization error tolerance) as ``correctly
detected" common lines, or ``inliers" and to the remaining common
lines as ``falsely detected", or ``outliers".

\section{Weighted LS and least unsquared deviation (LUD)}
\label{sec:LS}

We define the directions of detected common-lines between the transformed image
$i$ and transformed image $j$ as unit vectors (Figure \ref{fig:FPS-theorem})
\begin{eqnarray}
\vec{c}_{ij} & = & \left(c_{ij}^1,c_{ij}^2\right)=\left(\cos\left(2\pi m_{ij}/n_{\theta}\right),\sin\left(2\pi m_{ij}/n_{\theta}\right)\right),\label{eq:common-line-notation}\\
\vec{c}_{ji} & = & \left(c_{ji}^1,c_{ji}^2\right)=\left(\cos\left(2\pi m_{ji}/n_{\theta}\right),\sin\left(2\pi m_{ji}/n_{\theta}\right)\right),\label{eq:common-line-notation-1}
\end{eqnarray}
where $\vec{c}_{ij}$ and $\vec{c}_{ji}$ are on the transformed images
$i$ and $j$ respectively, and $m_{ij}$ and $m_{ji}$ are discrete
estimate for the common lines' positions using (\ref{eq:commonline-estimate}).
Let  the rotation matrices $R_{i}\in {\bf SO}(3)$, $i=1,\cdots,K$ represent
the orientations of the $K$ images. According to the Fourier projection-slice
theorem, the common lines on every two images should be the same after
the 2D transformed images are inserted in the 3D Fourier space using the corresponding
rotation matrices, that is,
\begin{equation}
R_{i}\left(\begin{array}{c}
\vec{c}_{ij}^T\\
0
\end{array}\right)=R_{j}\left(\begin{array}{c}
\vec{c}_{ji}^T\\
0
\end{array}\right)\;\text{for }1\leq i<j\leq K.\label{eq:common-line-eq}
\end{equation}
These can be viewed as $\left(\begin{array}{c}
K\\
2
\end{array}\right)$ linear equations for the $6K$ variables corresponding to the first
two columns of the rotation matrices (the third column of each rotation
matrix does not contribute in (\ref{eq:common-line-eq}) due to the
zero third entries in the common-line vectors in $\mathbb{R}^{3}$). The weighted LS approach for solving this system can be formulated as the minimization problem
\begin{equation}
\min_{R_{1},\ldots,R_{K}\in{\bf SO}\left(3\right)}\sum_{i\neq j}w_{ij}\left\Vert R_{i}\left(\vec{c}_{ij},0\right)^{T}-R_{j}\left(\vec{c}_{ji},0\right)^{T}\right\Vert ^{2},\label{eq:ls-exact}
\end{equation}
where the weights $w_{ij}$ indicate the confidence in the detections of common-lines between pairs of images. Since $\left(\vec{c}_{ij},0\right)^{T}$and $\left(\vec{c}_{ji},0\right)^{T}$
are 3D unit vectors, their rotations are also unit vectors; that is,
$\left\Vert R_{i}\left(\vec{c}_{ij},0\right)^{T}\right\Vert =\left\Vert R_{j}\left(\vec{c}_{ji},0\right)^{T}\right\Vert =1$.
It follows that the minimization problem (\ref{eq:ls-exact}) is equivalent
to the maximization problem of the sum of dot products
\begin{equation}
\max_{R_{1},\ldots,R_{K}\in{\bf SO} (3)}\sum_{i\neq j}w_{ij}\langle R_{i}\left(\vec{c}_{ij},0\right)^{T}, R_{j}\left(\vec{c}_{ji},0\right)^{T}\rangle. \label{eq:max-dotprod}
\end{equation}
When the
weight $w_{ij}=1$ for each pair $i\neq j$, (\ref{eq:max-dotprod}) is equivalent to the LS problem that was considered in \cite{Penczek1996205}, and more recently in \cite{Amit_eig_sdp} using convex relaxation of the non-convex constraint set. The solution to the LS problem
may not be optimal due to the typically large proportion
of outliers (Figure \ref{fig:fat-tails}).

\begin{figure}
\subfloat[SNR=1/32]{

\includegraphics[width=0.35\columnwidth]{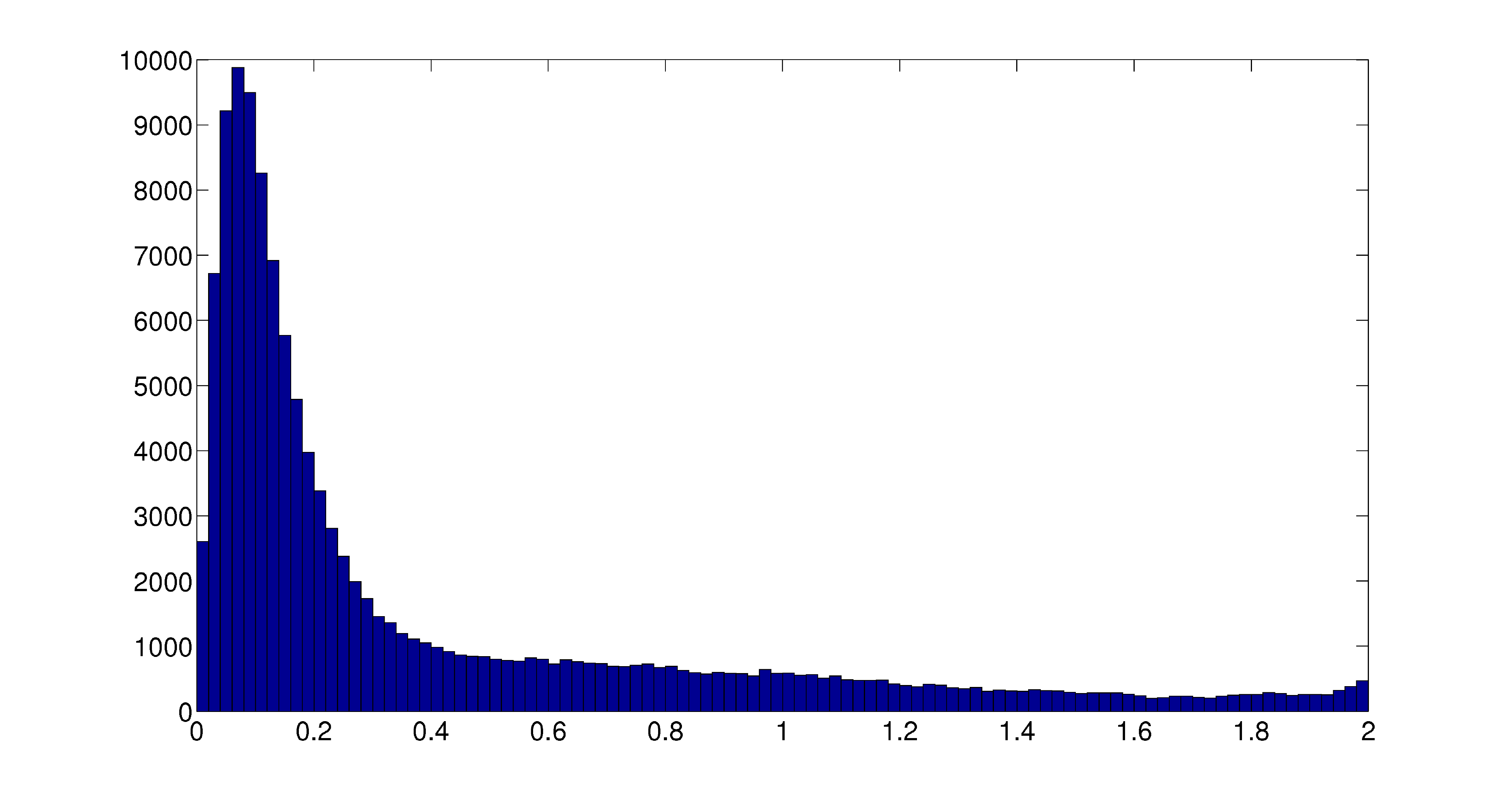}}\hfill{}\subfloat[SNR=1/64]{

\includegraphics[width=0.35\columnwidth]{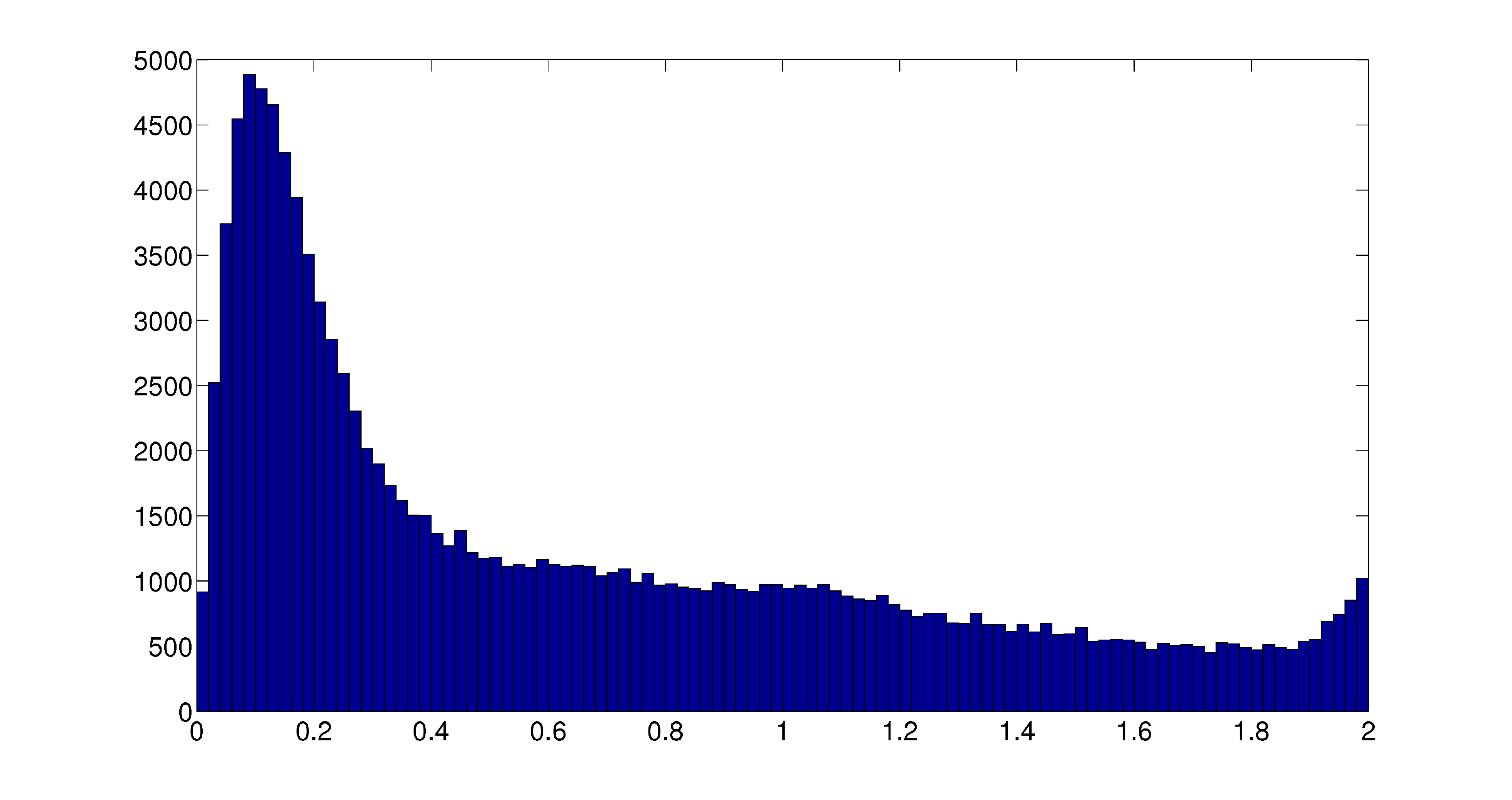}}\hfill{}\subfloat[\label{fig:L2vsL1}$x^2$ vs $\left|x\right| $]{

\includegraphics[width=0.2\columnwidth]{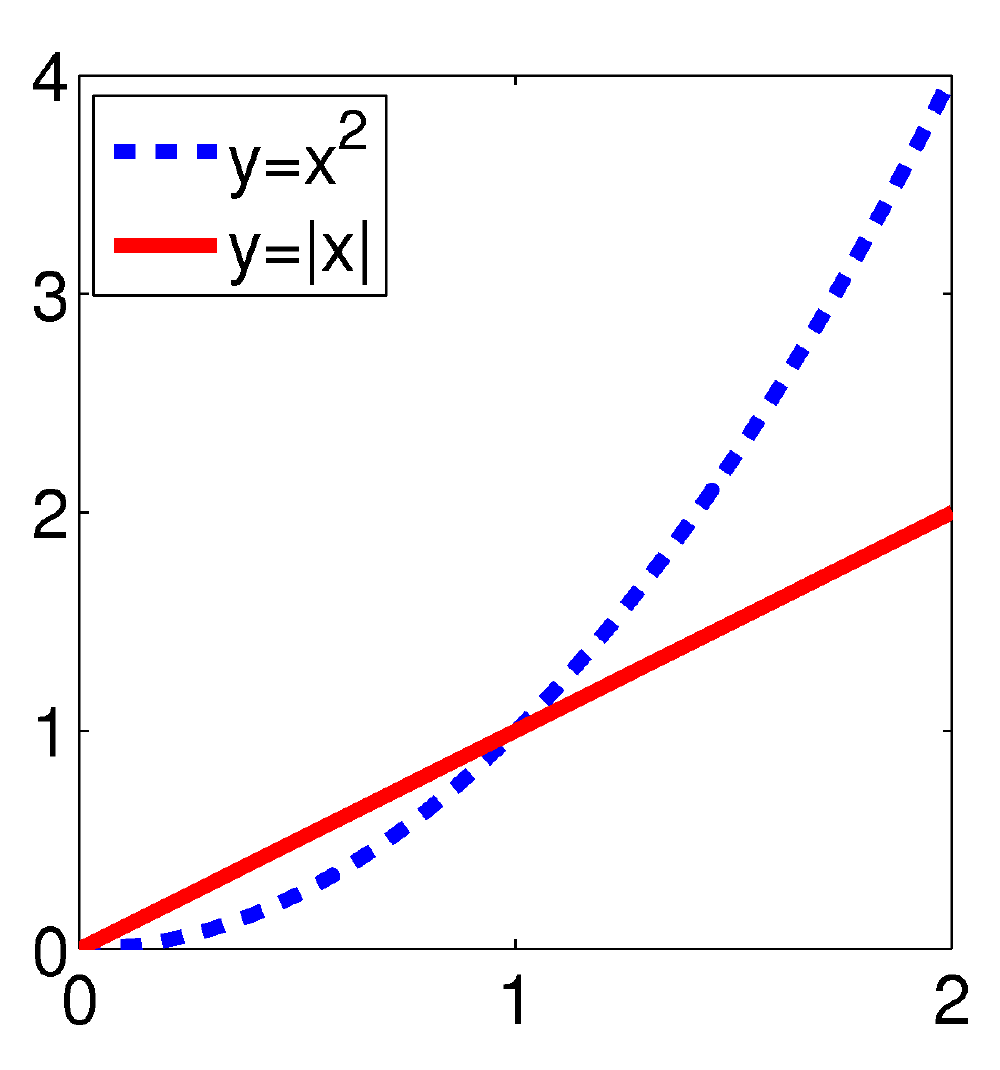}}\hfill{}

\caption{\label{fig:fat-tails}Left and Middle:
The histogram plots of errors in the detected common-lines $\vec{c}_{ij}$ for all $i$ and $j$, i.e., $\left\Vert R_{i}\left(\vec{c}_{ij},0\right)^{T}-R_{j}\left(\vec{c}_{ji},0\right)^{T}\right\Vert$ where $R_i$ is a true rotation matrix for all $i$. The fat tail in (b) indicates the detected common-lines contain a large amount of outliers. Right: elucidating the difference between the squared distance and the absolute deviation.}
\end{figure}

To guard the estimation of the orientations from outliers, we replace
the sum of weighted squared residuals in (\ref{eq:ls-exact}) with the more robust sum of unsquared residuals and obtain
\begin{equation}
\min_{R_{1},\ldots,R_{K}\in{\bf SO}\left(3\right)}\sum_{i\neq j}\left\Vert R_{i}\left(\vec{c}_{ij},0\right)^{T}-R_{j}\left(\vec{c}_{ji},0\right)^{T}\right\Vert \label{eq:lad-exact},
\end{equation}
or equivalently,
\begin{equation}
\min_{R_{1},\ldots,R_{K}\in{\bf SO}\left(3\right)}\sum_{i\neq j}\left\Vert \left(\vec{c}_{ij},0\right)^{T}-R_{i}^T R_{j}\left(\vec{c}_{ji},0\right)^{T}\right\Vert \label{eq:lad-exact_form2}.
\end{equation}
We refer to the minimization problem (\ref{eq:lad-exact}) as the least unsquared deviation (LUD) problem.
The self consistency error given in (\ref{eq:lad-exact}) reduces the contribution from large
residuals that may result from outliers (Figure {\ref{fig:L2vsL1}). We remark that it is also possible to consider the weighted version of (\ref{eq:lad-exact}), namely $$\min_{R_{1},\ldots,R_{K}\in{\bf SO}\left(3\right)}\sum_{i\neq j}w_{ij}\left\Vert R_{i}\left(\vec{c}_{ij},0\right)^{T}-R_{j}\left(\vec{c}_{ji},0\right)^{T}\right\Vert.$$ For simplicity, we focus here on the unweighted version.

\section{Semidefinite Programming Relaxation (SDR) and the Rounding Procedure}
\label{sec:SDP}

Both the weighted LS problem (\ref{eq:ls-exact}) and the LUD problem (\ref{eq:lad-exact})
are non-convex and therefore extremely difficult to solve if one requires
the matrices $R_{i}$ to be rotations, that is, when adding the constraints
\begin{equation}
R_{i}R_{i}^{T}=I_{3},\;\det\left(R_{i}\right)=1,\;\;\text{ for }i=1,\ldots,K,\label{eq:rotation-constraint}
\end{equation}
where $I_{3}$ is the $3\times3$ identity matrix. A relaxation method
that neglects the constraints (\ref{eq:rotation-constraint}) will
simply collapse to the trivial solution $R_{1}=\ldots=R_{K}=0$ which
obviously does not satisfy the constraint (\ref{eq:rotation-constraint}).

The relaxation in \cite{Amit_eig_sdp} that uses semidefinite programming (SDP) can be modified in a straightforward manner in order to deal with non-unity weights $w_{ij}$ in (\ref{eq:max-dotprod}). We present this modification here for three reasons. First, the weighted version is required by the IRLS procedure (see Section \ref{sec:IRLS}). Second, the rounding procedure after SDP employed here is slightly different than the one presented in \cite{Amit_eig_sdp} and is closer in spirit to the rounding procedure of Goemans and Williamson for the MAX-CUT problem \cite{GW_SDP}. Finally, in the Appendix we prove exact recovery of the rotations by the semidefinite relaxation procedure when the detected common-lines are all correct.

\subsection{Constructing the Gram matrix $G$ from the rotations $R_i$}
We denote the columns of the rotation matrix $R_{i}$ by $R_{i}^{1}$, $R_{i}^{2}$, and $R_{i}^{3}$, and write the rotation matrices
as
\[
R_{i}=\left(\begin{array}{ccc}
| & | & |\\
R_{i}^{1} & R_{i}^{2} & R_{i}^{3}\\
| & | & |
\end{array}\right),\;\; i=1,\ldots,K.
\]
We define a $3\times2K$ matrix $R$ by concatenating the first two columns of all rotation matrices:
\begin{equation}
R=\left(\begin{array}{cccc}
| & | &  & |\\
R_{1}^{1} & R_{1}^{2} & \cdots & R_{k}^{1}\\
| & | &  & |
\end{array}\begin{array}{cccc}
| & &  | & |\\
R_{k}^{2} & \cdots & R_{K}^{1} & R_{K}^{2}\\
| &  & | & |
\end{array}\right).\label{eq:R}
\end{equation}
The Gram matrix $G$ for the matrix $R$ is a $2K\times2K$ matrix
of inner products between the 3D column vectors of $R$, that is,
\begin{equation}
G=R^{T}R.\label{eq:G}
\end{equation}
Clearly, $G$ is a rank-$3$ semidefinite positive matrix ($G\succcurlyeq0$),
which can be conveniently written as a block matrix
\[
G=\left(G_{ij}\right)_{i,j=1,\cdots,K},
\]
where $G_{ij}$ is the $2\times 2$ upper left block of the rotation matrix $R_i^T R_j$, that is,
\[
G_{ij} = \left(\begin{array}{c}
(R_i^1)^T\\
(R_i^2)^T
\end{array}\right)\left(\begin{array}{cc}R_i^1 & R_i^2 \end{array}\right).
\]
In addition, the orthogonality of the rotation matrices ($R_{i}^{T}R_{i}=I$) implies
that
\begin{equation}
G_{ii}=I_2,\text{ }i=1,2,\cdots,K,\label{eq:orth-constraint}
\end{equation}
where $I_2$ is the $2\times 2$ identity matrix.

\subsection{SDR for weighted LS}
\label{sec:sdr_ls}
We
first define two $2K\times 2K$ matrices $S=(S_{ij})_{i,j=1,\cdots,K}$ and $W=(W_{ij})_{i,j=1,\cdots,K}$, where the $2\times2$ sub-blocks $S_{ij}$ and $W_{ij}$ are given by
\[
S_{ij}=\vec{c}_{ji}^T \vec{c}_{ij},\]
and
\[W_{ij} = w_{ij}\left(\begin{array}{cc}
1&1\\
1&1
\end{array}\right).
\]
Both matrices $S$  and $W$ are symmetric and they store all available common-line information and weight information, respectively. It follows that the objective
function (\ref{eq:max-dotprod}) is the trace of the matrix $\left(W\circ S\right)G$:
\begin{equation}
\sum_{i\neq j}w_{ij}\langle R_{i}\left(\vec{c}_{ij},0\right)^{T}, R_{j}\left(\vec{c}_{ji},0\right)^{T}\rangle=\text{trace}\left(\left(W\circ S\right)G\right),\label{eq:trace}
\end{equation}
where the symbol $\circ$ denotes the Hadamard product between two
matrices. A natural relaxation of the optimization problem (\ref{eq:max-dotprod})
is thus given by the SDP problem
\begin{eqnarray}
 & \max_{G\in\mathbb{R}^{2K\times2K}} & \text{trace}\left(\left(W\circ S\right)G\right)\label{eq:sdp}\\
\text{s.t. } & &G_{ii}=I_2,\text{ }i=1,2,\cdots,K,\label{eq:sdp_orth_constraint}\\
 &  & G\succcurlyeq0\label{eq:sdp_constraint}
\end{eqnarray}
The non-convex
rank-$3$ constraint on the Gram matrix $G$ is missing from this semidefinite relaxation (SDR) \cite{SDR}. The problem (\ref{eq:sdp})-(\ref{eq:sdp_constraint}) is an SDP that can be solved by standard SDP solvers. In particular, it can be well solved by the solver SDPLR \cite{Burer01anonlinear} which takes advantage of the low-rank property of $G$. SDPLR is a first-order algorithm via low-rank factorization and hence can provide approximate solutions for large scale problems. Moreover, the iterations of SDPLR are extremely fast.

\subsection{SDR for LUD}
Similar to defining the Gram matrix $G$ in (\ref{eq:G}), we define a $3K\times 3K$ matrix $\tilde{G}$ as $\tilde{G}=(\tilde{G}_{ij})_{i,j=1,\cdots,K}$, where each $\tilde{G}_{ij}$ is a $3\times 3$ block defined as $\tilde{G}_{ij} = R_i^T R_j$. Then, a natural SDR for (\ref{eq:lad-exact_form2}) is given by
\begin{equation}
\min_{\tilde{G} \succcurlyeq 0}\sum_{i\neq j}\left\Vert \left(\vec{c}_{ij},0\right)^{T}-\tilde{G}_{ij}\left(\vec{c}_{ji},0\right)^{T}\right\Vert,\text{ s.t. } \tilde{G}_{ii}=I_3. \label{eq:lad-exact_sdr}
\end{equation}
The constraints missing in this SDP formulation are the non-convex
rank-$3$ constraint and the determinant constraints $\text{det}(\tilde{G}_{ij})=1$ on the Gram matrix $\tilde{G}$.
However, the solution $\tilde{G}$ to (\ref{eq:lad-exact_sdr}) is not unique.
Note that if a set of rotation matrices $\{R_i\}$ is the solution to (\ref{eq:lad-exact_form2}), then the set of conjugated rotation matrices $\{JR_i J\}$ is also the solution to (\ref{eq:lad-exact_form2}), where the matrix $J$ is defined as
\[
J=\left(\begin{array}{ccc}
1&0&0\\
0&1&0\\
0&0&-1
\end{array}\right).
\]
Thus, another solution to (\ref{eq:lad-exact_sdr}) is the Gram matrix $\tilde{G}^J=(\tilde{G}_{ij}^J)_{i,j=1,\cdots,K}$ with the $3\times 3$ sub-blocks given by $\tilde{G}_{ij} ^J=J R_i^T J J R_j J = J R_i^T R_j J$. It can be verified that $\frac{1}{2}(\tilde{G}+\tilde{G}^J)$ is also a solution to (\ref{eq:lad-exact_sdr}). Using the fact that
\[
\frac{1}{2}(\tilde{G}_{ij}+\tilde{G}_{ij}^J)=\left(\begin{array}{cc}
G_{ij} & \begin{array}{c}
0\\
0
\end{array}\\
\begin{array}{cc}
0 & 0\end{array} & 0
\end{array}\right),
\]
the problem (\ref{eq:lad-exact_sdr}) is reduced to
\begin{equation}
\min_{G\succcurlyeq 0}\sum_{i\neq j}\left\Vert \vec{c}_{ij}^{T}-G_{ij}\vec{c}_{ji}^{T}\right\Vert,\text{ s.t. } G_{ii}=I_2. \label{eq:lad-exact_sdr_2}
\end{equation}
This is a SDR for the LUD problem (\ref{eq:lad-exact}). The problem (\ref{eq:lad-exact_sdr_2}) can be solved using ADMM (see details in section \ref{sec:adm_lud}).
\subsection{The Randomized Rounding Procedure}
\label{sec:rounding}
The matrix $R$ is recovered
from a random projection of the solution $G$ of the SDP (\ref{eq:sdp}).
We randomly draw a $2K\times3$ matrix $P$ from the Stiefel manifold
$V_{3}(\mathbb{R}^{2K})$. The random matrix $P$ is computed using
the orthogonal matrix $Q$ and the upper triangular matrix $R$ from
QR factorization of a random matrix with standard i.i.d Gaussian entries,
that is, $P=Q \text{ sign}\left(\text{diag}\left(R\right)\right)$,
where sign stands for the entry-wise sign function and diag$\left(R\right)$ is a diagonal
matrix whose diagonal entries are the same as those of the matrix $R$.
The matrix $P$ is shown to be drawn uniformly from the Stiefel manifold
in  \cite{Mezzadri:2007}. We project the solution $G$ onto the subspace spanned
by the three columns of the matrix $GP$ \footnote{The 3 dimensional subspace can also be spanned by the eigenvectors associated with the top three eigenvalues of $G$, while the fourth largest eigenvalue is expected to be significantly smaller; see also \cite{Amit_eig_sdp}.}.

The $2K\times 3$ matrix $GP$ is a proxy to the matrix $R^T$ (up to a global $3\times 3$ orthogonal transformation). In other words, we can regard the $3\times 2K$ matrix $(GP)^T$ as composed from $K$ matrices of size $3\times 2$, denoted $A_i$ ($i=1,\ldots,K$), namely,
$$(GP)^T = \left(\begin{array}{cccc} A_1 & A_2 & \cdots & A_K
\end{array} \right)$$
The two columns of each $A_i$ correspond to $R_i^1$ and $R_i^2$ (compare to (\ref{eq:R})). We therefore estimate the matrix
$R_{i}^{[1,2]} =\left(\begin{array}{cc}                                                                                                                                                                                   R_i^1 & R_i^2                                                                                                                                                                                 \end{array}\right)$
as the closest matrix to $A_{i}$
on the Stiefel manifold $V_{2}(\mathbb{R}^{3})$ in the Frobenius
matrix norm. The closest matrix is given by (see, e.g., \cite{Arun:1987:LFT:28809.28821}) $R_{i}^{[1,2]}=U_{i}V_{i}^{T}$,
where $A_{i}=U_{i}\Sigma_{i}V_{i}^{T}$ is the singular value
decomposition of $A_{i}$. We note that except for the orthogonality
constraint (\ref{eq:sdp_orth_constraint}), the semidefinite program
(\ref{eq:sdp})\textendash{}(\ref{eq:sdp_constraint}) is identical
to the Goemans\textendash{}Williamson SDP for finding the maximum
cut in a weighted graph \cite{GW_SDP}, where the SDR and the randomized rounding procedure \cite{SDP,SDR} for maximum cut problem is proved to have a $0.87$ performance guarantee. From the complexity point of view,
SDP can be solved in polynomial time to any given precision. The idea
of using SDP for determining image orientations in cryo-EM was originally
proposed in \cite{Amit_eig_sdp}.

\section{The Spectral Norm Constraint}
\label{sec:spectral}

In our numerical experiments (see Section \ref{sec:numerical_results}), we observed that in the presence of many ``outliers" (i.e., a large proportion of misidentified common-lines), the estimated viewing directions\footnote{The viewing direction is the third column of the underlying rotation matrix.} that are obtained by either solving (\ref{eq:sdp})-(\ref{eq:sdp_constraint}) or (\ref{eq:lad-exact_sdr_2}) are highly clustered (Figure \ref{fig:alpha}). This empirical behavior of the solutions can be explained by the fact that images whose viewing directions are parallel share many common lines. In other words, when the viewing directions of $R_i$ and $R_j$ are nearby, the fidelity term $\left\Vert R_{i}\left(\vec{c}_{ij},0\right)^{T}-R_{j}\left(\vec{c}_{ji},0\right)^{T}\right\Vert$ (that appears in all cost functions) can become small (i.e., close to 0), even when the common line pair $(\vec{c}_{ij},\vec{c}_{ji})$ is misidentified.

\begin{figure}
\center
    \includegraphics[width=0.65\paperwidth]{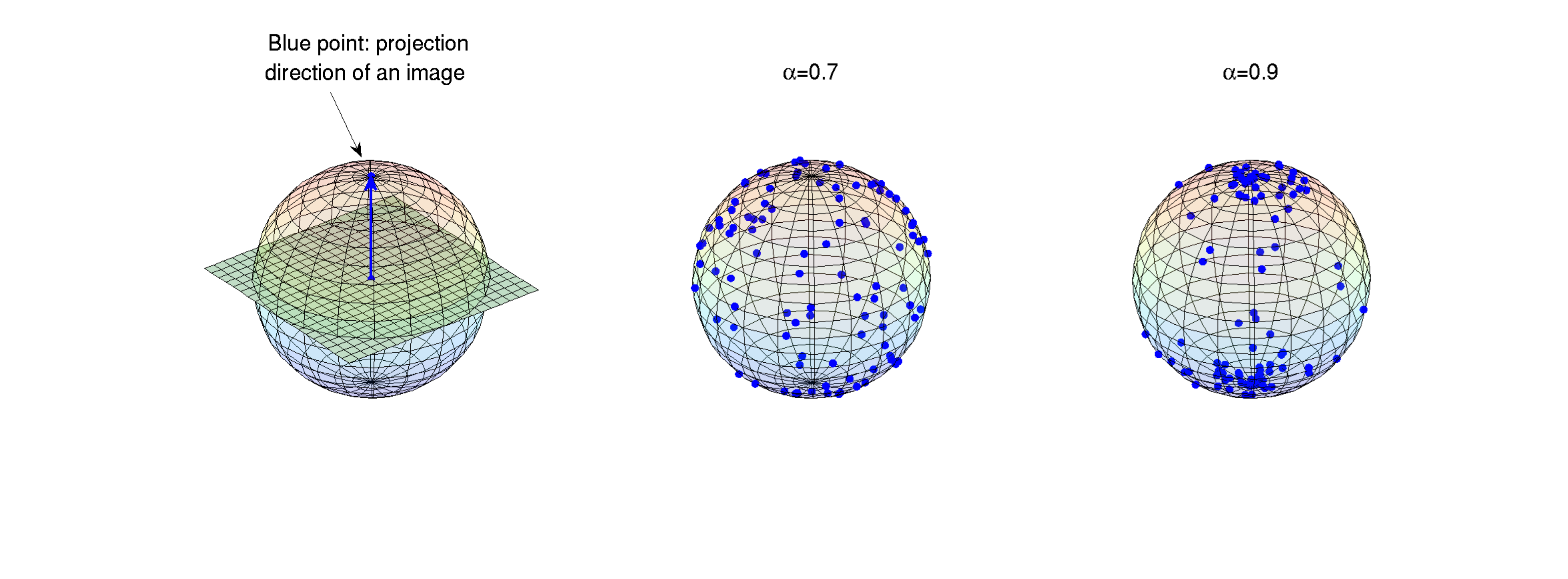}
 \caption{\label{fig:alpha}The dependency of the spectral norm of $G$ (denoted as $\alpha K$ here) on the distribution of orientations of the images. Here $K=100$. The larger $\alpha$ is, the more clustered the orientations are.}
\end{figure}

In order to prevent the viewing directions from clustering, we add the following constraint on the spectral norm of the Gram matrix $G$ to the optimization problem (\ref{eq:sdp})-(\ref{eq:sdp_constraint}) or (\ref{eq:lad-exact_sdr_2}):
\begin{equation}
G\preccurlyeq \alpha K I_{2K},
\end{equation}
where $I_{2K}$ is the $2K\times 2K$ identity matrix, or equivalently
\begin{equation}
\left\Vert G\right\Vert_2\leq \alpha  K,\label{eq:spectral_constraint}
\end{equation}
where $\left\Vert G\right\Vert_2$ is the spectral norm of the matrix $G$, and
the parameter $\alpha \in [\frac{2}{3},1)$ controls the spread of the viewing
directions. If the true image orientations are uniformly sampled from the
rotation group ${\bf SO(3)}$, then by the law of large numbers and the symmetry
of the distribution of orientations, the spectral norm of the true Gram matrix
$G_{true}$ is approximately $\frac{2}{3}K$ (To see this, notice that
$\operatorname{Tr}(G) = \operatorname{Tr}(R^TR) = \operatorname{Tr}(RR^T) =
\operatorname{Tr}(KI_2) = 2K$. Thus, the sum of eigenvalues of $G$ is $2K$.
Recall that $G$ is of rank 3, so if the rotations are uniformly distributed then
each of its three non-trivial eigenvalues equals $\frac{2K}{3}$). On the other
hand, if the true viewing directions are highly clustered, then the spectral
norm of the true Gram matrix $G_{true}$ is close to $K$. For a known
distribution of orientations, we can compute the spectral norm of the true Gram
matrix $G_{true}$ accordingly, which can be verified to be a number between
$\frac{2}{3}$ and $1$. In practice, however, the distribution of the viewing
directions is usually unknown a-priori, and often it cannot be assumed to be
uniform. To prevent a solution with clustered viewing directions, we fix the
parameter $\alpha$ to some number satisfying $\frac{2}{3}\leq\alpha<1$, and
perhaps even try a few possible values for $\alpha$ and choose the best value by examining the resulting reconstructions.

\section{The Alternating Direction Method of Multipliers (ADMM) for SDRs with Spectral Norm Constraint}
\label{sec:adm}

The application of ADMM to SDP problems was considered in \cite{ADM}. Here we generalize the application of ADMM to the optimization problems considered in previous sections. ADMM is a multiple-splitting algorithm that minimizes the augmented Lagrangian function in an alternating fashion such that in each step it minimizes over one block of the variables with all other blocks fixed, and then update the Lagrange multipliers. We apply ADMM to the dual problems since the linear constraints (\ref{eq:adm_linear}) satisfy $\mathcal{AA}^{*}=I$ which simplifies the computation of subproblems. The strong duality theorem, which is known as Slater's theorem, guarantees that in the presence of a strictly feasible solution, a primal problem can be solved by solving its dual problem. To obtain a strictly feasible solution to the primal problems with the positive semidefinite constraint, the linear constraint  (\ref{eq:adm_linear}) and  the spectral norm constraint (\ref{eq:adm_spectral}), we can construct a Gram matrix G in (\ref{eq:G}) using rotations sampled from a uniform distribution over the rotation group. Therefore, strong duality holds for the primal problems, and the primal problems can be solved by applying ADMM to their corresponding dual problems.

\subsection{The relaxed weighted LS problem}
\label{sec:adm_wls}
The weighted LS problem after SDR (\ref{eq:sdp})-(\ref{eq:sdp_constraint}) can be efficiently solved using SDPLR \cite{Burer01anonlinear}. However, SDPLR is not suitable for the problem after the spectral norm constraint on $G$ (\ref{eq:spectral_constraint}) is added to (\ref{eq:sdp})-(\ref{eq:sdp_constraint}). This is because the constraint  (\ref{eq:spectral_constraint}) can be written as $\alpha K I - G\succcurlyeq0$, but $\alpha K I -G$ does not have a low rank structure. Moreover, SDP solvers using polynomial-time primal-dual interior point methods are designed for small to medium sized problems. Therefore, they are not suitable for our problem. Instead, we devise here a version of ADMM which takes advantage of the low-rank property of $G$. After the spectral norm constraint (\ref{eq:spectral_constraint}) is added, the problem (\ref{eq:sdp})-(\ref{eq:sdp_constraint}) becomes
\begin{eqnarray}
\min_{G\succcurlyeq0} & -\left\langle C,G\right\rangle \label{eq:adm_obj}\\
\text{s.t.} & \mathcal{A}\left(G\right)={\bf b}\label{eq:adm_linear}\\
 & \left\Vert G\right\Vert _{2}\leq\alpha K\label{eq:adm_spectral}
\end{eqnarray}
where
\begin{equation}
\mathcal{A}\left(G\right)=\left(\begin{array}{c}
G_{ii}^{11}\\
G_{ii}^{22}\\
\frac{\sqrt{2}}{2}G_{ii}^{12}+\frac{\sqrt{2}}{2}G_{ii}^{21}
\end{array}\right)_{i=1,2,\ldots,K},\,{\bf b}=\left(\begin{array}{c}
b_{i}^{1}\\
b_{i}^{2}\\
b_{i}^{3}
\end{array}\right)_{i=1,2,\ldots,K},\label{eq:Ab}
\end{equation}
\[
b_{i}^{1}=b_{i}^{2}=1,\: b_{i}^{3}=0\text{ for all }i,
\]
$G_{ij}^{pq}$ denotes the $(p,q)$ th element in the $2\times2$ sub-block $G_{ij}$, $C=W\circ S$ is a symmetric matrix and $\left\langle C,G\right\rangle =\text{trace}\left(CG\right)$.
Following the equality $\left\langle \mathcal{A}\left(G\right),{\bf y}\right\rangle =\left\langle G,\mathcal{A}^{*}\left({\bf y}\right)\right\rangle $
for arbitrary ${\bf y}=\left(\begin{array}{c}
y_{i}^{1}\\
y_{i}^{2}\\
y_{i}^{3}
\end{array}\right)_{i=1,2,\ldots,K}$, the adjoint of the operator $\mathcal{A}$ is defined as
\[
\mathcal{A}^{*}\left({\bf y}\right)=Y=\left(\begin{array}{cc}
Y^{11} & Y^{12}\\
Y^{21} & Y^{22}
\end{array}\right),
\]
where for $i=1,2,\ldots,K$
\[
Y_{ii}^{11}=y_{i}^{1},\: Y_{ii}^{22}=y_{i}^{2},\text{ and }Y_{ii}^{12}=Y_{ii}^{21}=y_{i}^{3}/\sqrt{2}.
\]
It can be verified that $\mathcal{AA}^{*}=I$. The dual problem of
problem (\ref{eq:adm_obj})-(\ref{eq:adm_spectral}) is
\begin{equation}
\max_{{\bf y},X\succcurlyeq0}\min_{\left\Vert G\right\Vert _{2}\leq\alpha K}-\left\langle C,G\right\rangle -\left\langle {\bf y},\mathcal{A}\left(G\right)-{\bf b}\right\rangle -\left\langle G,X\right\rangle .\label{eq:dual1}
\end{equation}
By rearranging terms in (\ref{eq:dual1}), we obtain
\begin{equation}
\max_{{\bf y},X\succcurlyeq0}\min_{\left\Vert G\right\Vert _{2}\leq\alpha K}-\left\langle C+X+\mathcal{A}^{*}\left({\bf y}\right),G\right\rangle +{\bf y}^{T}{\bf b}.\label{eq:dual2}
\end{equation}
Using the fact that the dual norm of the spectral norm is the nuclear
norm (Proposition 2.1 in \cite{Recht2010}), we can obtain  from (\ref{eq:dual2})
the dual problem
\begin{equation}
\max_{{\bf y},X\succcurlyeq0}\,{\bf y}^{T}{\bf b}-\alpha K\left\Vert C+X+\mathcal{A}^{*}\left({\bf y}\right)\right\Vert _{*},\label{eq:dual3}
\end{equation}
where $\left\Vert \cdot\right\Vert _{*}$ denotes the nuclear norm.
Introducing a variable $Z=C+X+\mathcal{A}^{*}\left({\bf y}\right),$
we obtain from (\ref{eq:dual3}) that
\begin{eqnarray}
\min_{{\bf y},X\succcurlyeq0} & -{\bf y}^{T}{\bf b}+\alpha K\left\Vert Z\right\Vert _{*}\label{eq:dual4}\\
\text{s.t.} \;& Z=C+X+\mathcal{A}^{*}\left({\bf y}\right).\label{eq:dual5}
\end{eqnarray}
Since $Z$ is a symmetric matrix, $\left\Vert Z\right\Vert _{*}$
is the summation of the absolute values of the eigenvalues of $Z$.
The augmented Lagrangian function of (\ref{eq:dual4})-(\ref{eq:dual5})
is defined as
\begin{eqnarray}
\mathcal{L}\left({\bf y},Z,X,G\right)=&-&{\bf y}^{T}{\bf b}+\alpha K\left\Vert Z\right\Vert _{*}+\left\langle G,C+X+\mathcal{A}^{*}\left({\bf y}\right)-Z\right\rangle \nonumber\\
&+&\frac{\mu}{2}\left\Vert C+X+\mathcal{A}^{*}\left({\bf y}\right)-Z\right\Vert _{F}^{2},\label{eq:Lagrangian}
\end{eqnarray}
where $\mu>0$ is a penalty parameter. Using the augmented Lagrangian
function (\ref{eq:Lagrangian}), we devise an ADMM that minimizes (\ref{eq:Lagrangian}) with respect to
${\bf y}$, $Z$, $X$, and $G$ in an alternating fashion, that is,
given some initial guess, in each iteration the following three subproblems
are solved sequentially:
\begin{eqnarray}
{\bf y}^{k+1} & = & \arg\min_{{\bf y}}\mathcal{L}\left({\bf y},Z^{k},X^{k},G^{k}\right),\label{eq:sub_y}\\
Z^{k+1} & = & \arg\min_{Z}\mathcal{L}\left({\bf y}^{k+1},Z,X^{k},G^{k}\right),\label{eq:sub_Z}\\
X^{k+1} & = & \arg\min_{X\succcurlyeq0}\mathcal{L}\left({\bf y}^{k+1},Z^{k+1},X,G^{k}\right),\label{eq:sub_X}
\end{eqnarray}
and the Lagrange multiplier $G$ is updated by
\begin{equation}
G^{k+1}=G^{k}+\gamma\mu\left(C+X^{k+1}+\mathcal{A}^{*}\left({\bf y}^{k+1}\right)-Z^{k+1}\right),\label{eq:G_update}
\end{equation}
where $\gamma\in\left(0,\frac{1+\sqrt{5}}{2}\right)$ is an appropriately
chosen step length.

To solve the subproblem (\ref{eq:sub_y}), we use the first order
optimality condition
\begin{equation*}
\nabla_{{\bf y}}\mathcal{L}\left({\bf y},Z^{k},X^{k},G^{k}\right)=0
\end{equation*}
and the fact that $\mathcal{AA}^{*}=I$, and we obtain
\[
{\bf y}^{k+1}=-\mathcal{A}\left(C+X^{k}-Z^{k}\right)-\frac{1}{\mu}\left(\mathcal{A}\left(G\right)-{\bf b}\right).
\]

By rearranging the terms of $\mathcal{L}\left({\bf y}^{k+1},Z,X^{k},G^{k}\right)$,
it can be verified that the subproblem (\ref{eq:sub_Z}) is equivalent
to
\[
\min_{Z}\frac{\alpha K}{\mu}\left\Vert Z\right\Vert _{*}+\frac{1}{2}\left\Vert Z-B^{k}\right\Vert _{F}^{2},
\]
where $B^{k}=C+X^{k}+\mathcal{A}^{*}\left({\bf y}^{k+1}\right)+\frac{1}{\mu}G^{k}$.
Let $B^{k}=U\Lambda U^{T}$ be the spectral decomposition of the matrix
$B^{k}$, where $\Lambda=\text{diag}\left(\boldsymbol{\lambda}\right)=\text{diag}\left(\lambda_{1},\ldots,\lambda_{2K}\right).$
Then $Z^{k+1}=U\text{diag}\left(\hat{{\bf z}}\right)U^{T},$ where
$\hat{{\bf z}}$ is the optimal solution of the problem
\begin{equation}
\min_{{\bf z}}\frac{\alpha K}{\mu}\left\Vert {\bf z}\right\Vert _{1}+\frac{1}{2}\left\Vert {\bf z}-\boldsymbol{\lambda}\right\Vert _{2}^{2},\label{eq:soft_th}
\end{equation}
It can be shown that the unique solution of (\ref{eq:soft_th}) admits
a closed form called the soft-thresholding operator, following a terminology
introduced by Donoho and Johnstone \cite{soft_th}; it can be written
as
\[
\hat{z}_{i}=\begin{cases}
0, & \text{if }\left|\lambda_{i}\right|\leq\alpha K/\mu\\
(1-\frac{\alpha}{\mu}/|\lambda_{i}|)\lambda_{i}, & \text{otherwise.}
\end{cases}
\]

The problem (\ref{eq:sub_X}) can be shown to be equivalent to
\[
\min_{X}\left\Vert X-H^{k}\right\Vert _{F}^{2},\text{ s.t. }X\succcurlyeq0,
\]
where $H^{k}=Z^{k+1}-C-\mathcal{A}^{*}\left({\bf y}^{k+1}\right)-\frac{1}{\mu}G^{k}.$
The solution $X^{k+1}=V_{+}\Sigma_{+}V_{+}^{T}$ is the Euclidean
projection of $H^{k}$ onto the semidefinite cone (section 8.1.1 in \cite{Boyd}), where
\[
V\Sigma V^{T}=\left(\begin{array}{cc}
V_{+} & V_{-}\end{array}\right)\left(\begin{array}{cc}
\Sigma_{+} & 0\\
0 & \Sigma_{-}
\end{array}\right)\left(\begin{array}{c}
V_{+}^{T}\\
V_{-}^{T}
\end{array}\right)
\]
is the spectral decomposition of the matrix $H^{k},$ and $\Sigma_{+}$
and $\Sigma_{-}$ are the positive and negative eigenvalues of $H^{k}.$

It follows from the update rule (\ref{eq:G_update}) that
\begin{eqnarray*}
G^{k+1} & = & (1-\gamma)G^{k}+\gamma\mu\left(C+X^{k+1}+\mathcal{A}^{*}\left({\bf y}^{k+1}\right)-Z^{k+1}+\frac{1}{\mu}G^{k}\right)\\
 & = & (1-\gamma)G^{k}+\gamma\mu\left(X^{k+1}-H^{k}\right).
\end{eqnarray*}

\subsection{The relaxed LUD problem}
\label{sec:adm_lud}
Consider the LUD problem after SDR:
\begin{equation}
\min_{G\succcurlyeq0}\sum_{i<j}\left\Vert \vec{c}_{ij}^{T}-G_{ij}\vec{c}_{ji}^{T}\right\Vert \text{ s.t. }\mathcal{A}\left(G\right)={\bf b},\label{eq:L1_convex}
\end{equation}
where $G$, $\mathcal{A}$ and $\bf b$ are defined in (\ref{eq:G}) and (\ref{eq:Ab}) respectively. The ADMM devised to solve (\ref{eq:L1_convex}) is similar to and simpler than the ADMM devised to solve the one with the spectral norm constraint. We focus on the more difficult problem with the spectral norm constraint. Introducing ${\bf x}_{ij}=\vec{c}_{ij}^{T}-G_{ij}\vec{c}_{ji}^{T}$
and adding the spectral norm constraint $\left\Vert G\right\Vert _{2}\leq\alpha K$,
we obtain
\begin{equation}
\min_{{\bf x}_{ij},G\succcurlyeq0}\sum_{i<j}\left\Vert {\bf x}_{ij}\right\Vert \text{ s.t. }\mathcal{A}\left(G\right)={\bf b},\,{\bf x}_{ij}=\vec{c}_{ij}^{T}-G_{ij}\vec{c}_{ji}^{T},\,\left\Vert G\right\Vert _{2}\leq\alpha K.\label{eq:L1_ADM_primal}
\end{equation}
The dual problem of problem (\ref{eq:L1_ADM_primal}) is
\begin{equation}
\max_{\boldsymbol{\theta}_{ij},{\bf y},X\succcurlyeq0}\min_{{\bf x}_{ij},\left\Vert G\right\Vert _{2}\leq\alpha K}\sum_{i<j}\left(\left\Vert {\bf x}_{ij}\right\Vert -\left\langle \boldsymbol{\theta}_{ij},{\bf x}_{ij}-\vec{c}_{ij}^{T}+G_{ij}\vec{c}_{ji}^{T}\right\rangle \right)-\left\langle {\bf y},\mathcal{A}\left(G\right)-{\bf b}\right\rangle -\left\langle G,X\right\rangle .\label{eq:L1_dual1}
\end{equation}
By rearranging terms in (\ref{eq:L1_dual1}), we obtain
\begin{eqnarray}
\max_{\boldsymbol{\theta}_{ij},{\bf y},X\succcurlyeq0}\min_{{\bf x}_{ij},\left\Vert G\right\Vert _{2}\leq\alpha K}&-&\left\langle Q\left(\boldsymbol{\theta}\right)+X+\mathcal{A}^{*}\left({\bf y}\right),G\right\rangle +{\bf y}^{T}{\bf b} \nonumber\\
&+&\sum_{i<j}\left(\left\Vert {\bf x}_{ij}\right\Vert -\left\langle \boldsymbol{\theta}_{ij},{\bf x}_{ij}\right\rangle +\left\langle \boldsymbol{\theta}_{ij},\vec{c}_{ij}^{T}\right\rangle \right),\label{eq:L1_dual2}
\end{eqnarray}
where $\boldsymbol{\theta}=\left(\boldsymbol{\theta}_{ij}\right)_{i,j=1,\ldots,K}$,
$\boldsymbol{\theta}_{ij}=\left(\theta_{ij}^{1},\theta_{ij}^{2}\right)^{T}$,
$\vec{c}_{ij}=\left(c_{ij}^{1},c_{ij}^{2}\right)$,
\[
Q\left(\boldsymbol{\theta}\right)=\frac{1}{2}\left(\begin{array}{cc}
Q^{11}\left(\boldsymbol{\theta}\right) & Q^{12}\left(\boldsymbol{\theta}\right)\\
Q^{21}\left(\boldsymbol{\theta}\right) & Q^{22}\left(\boldsymbol{\theta}\right)
\end{array}\right)\text{ and }Q^{pq}\left(\boldsymbol{\theta}\right)=\left(\begin{array}{cccc}
0 & \theta_{12}^{p}c_{21}^{q} & \cdots & \theta_{1K}^{p}c_{K1}^{q}\\
c_{21}^{q}\theta_{12}^{p} & 0 & \cdots & \theta_{2K}^{p}c_{K2}^{q}\\
\vdots & \vdots & \ddots & \vdots\\
c_{K1}^{q}\theta_{1K}^{p} & c_{K2}^{q}\theta_{2K}^{p} & \cdots & 0
\end{array}\right)
\]
for $p$, $q=1,2$. It is easy to verify that for $1\leq i<j\leq K$
\begin{equation}
\min_{{\bf x}_{ij}}\left(\left\Vert {\bf x}_{ij}\right\Vert -\left\langle \boldsymbol{\theta}_{ij},{\bf x}_{ij}\right\rangle \right)=\begin{cases}
0 & \text{ if \ensuremath{\left\Vert \boldsymbol{\theta}_{ij}\right\Vert \leq}1}\\
-\infty & \text{otherwise.}
\end{cases}\label{eq:theta_less_1}
\end{equation}
In fact, (\ref{eq:theta_less_1}) is obtained using the inequality
\begin{eqnarray}
\left\Vert {\bf x}_{ij}\right\Vert -\left\langle \boldsymbol{\theta}_{ij},{\bf x}_{ij}\right\rangle  & = & \left\Vert {\bf x}_{ij}\right\Vert -\left\Vert \boldsymbol{\theta}_{ij}\right\Vert \left\Vert \boldsymbol{x}_{ij}\right\Vert \left\langle \boldsymbol{\theta}_{ij}/\left\Vert \boldsymbol{\theta}_{ij}\right\Vert ,{\bf x}_{ij}/\left\Vert \boldsymbol{x}_{ij}\right\Vert \right\rangle \nonumber \\
 & \geq & \left\Vert {\bf x}_{ij}\right\Vert -\left\Vert \boldsymbol{\theta}_{ij}\right\Vert \left\Vert \boldsymbol{x}_{ij}\right\Vert =\left(1-\left\Vert \boldsymbol{\theta}_{ij}\right\Vert \right)\left\Vert \boldsymbol{x}_{ij}\right\Vert ,\label{eq:theta_less_1_ineq}
\end{eqnarray}
and the inequality in (\ref{eq:theta_less_1_ineq} ) holds when $\boldsymbol{\theta_{ij}}$
and ${\bf x}_{ij}$ have the same direction. Using the fact that the
dual norm of the spectral norm is the nuclear norm and the fact in (\ref{eq:theta_less_1}),
we can obtain from (\ref{eq:L1_dual2}) the dual problem
\begin{eqnarray}
\min_{\boldsymbol{\theta}_{ij},{\bf y},X\succcurlyeq0} & -{\bf y}^{T}{\bf b}-\sum_{i<j}\left\langle \boldsymbol{\theta}_{ij},\vec{c}_{ij}^{T}\right\rangle +\alpha K\left\Vert Z\right\Vert _{*}\label{eq:L1_dual_3}\\
\text{s.t.} & Z=Q\left(\boldsymbol{\theta}\right)+X+\mathcal{A}^{*}\left({\bf y}\right), & \text{and }\left\Vert \boldsymbol{\theta}_{ij}\right\Vert \leq1.\label{eq:L1_dual_4}
\end{eqnarray}
The augmented Lagrangian function of problem (\ref{eq:L1_dual_3})-(\ref{eq:L1_dual_4})
is defined as
\begin{eqnarray}
\mathcal{L}\left({\bf y},\boldsymbol{\theta},Z,X,G\right)=&-&{\bf y}^{T}{\bf b}+\alpha K\left\Vert Z\right\Vert _{*}-\sum_{i<j}\left\langle \boldsymbol{\theta}_{ij},\vec{c}_{ij}^{T}\right\rangle +\left\langle G,Q\left(\boldsymbol{\theta}\right)+X+\mathcal{A}^{*}\left({\bf y}\right)-Z\right\rangle\nonumber \\
&+&\frac{\mu}{2}\left\Vert Q\left(\boldsymbol{\theta}\right)+X+\mathcal{A}^{*}\left({\bf y}\right)-Z\right\Vert _{F}^{2},\label{eq:L1_Lagrangian}
\end{eqnarray}
for $\left\Vert \boldsymbol{\theta}_{ij}\right\Vert \leq1$, where
$\mu>0$ is a penalty parameter. Similar to section \ref{sec:adm_wls}, using the
augmented Lagrangian function (\ref{eq:L1_Lagrangian}), ADMM is used
to minimize (\ref{eq:L1_Lagrangian}) with respect to ${\bf y}$,
$\boldsymbol{\theta}$, $Z$, $X$, and $G$ alternatively, that is,
given some initial guess, in each iteration the following four subproblems
are solved sequentially:
\begin{eqnarray}
{\bf y}^{k+1} & = & \arg\min_{{\bf y}}\mathcal{L}\left({\bf y},\boldsymbol{\theta}^{k},Z^{k},X^{k},G^{k}\right),\label{eq:L1_sub_y}\\
\boldsymbol{\theta}_{ij}^{k+1} & = & \arg\min_{{\bf \left\Vert \boldsymbol{\theta}_{ij}\right\Vert \leq1}}\mathcal{L}\left({\bf y}^{k+1},\boldsymbol{\theta},Z^{k},X^{k},G^{k}\right),\label{eq:L1_sub_theta}\\
Z^{k+1} & = & \arg\min_{Z}\mathcal{L}\left({\bf y}^{k+1},\boldsymbol{\theta}^{k+1},Z,X^{k},G^{k}\right),\label{eq:L1_sub_Z}\\
X^{k+1} & = & \arg\min_{X\succcurlyeq0}\mathcal{L}\left({\bf y}^{k+1},\boldsymbol{\theta}^{k+1},Z^{k+1},X,G^{k}\right),\label{eq:L1_sub_X}
\end{eqnarray}
and the Lagrange multiplier $G$ is updated by
\begin{equation}
G^{k+1}=G^{k}+\gamma\mu\left(Q\left(\boldsymbol{\theta}^{k+1}\right)+X^{k+1}+\mathcal{A}^{*}\left({\bf y}^{k+1}\right)-Z^{k+1}\right),\label{eq:L1_sub_theta_1}
\end{equation}
where $\gamma\in\left(0,\frac{1+\sqrt{5}}{2}\right)$ is an approprately
chosen step length. The methods to solve subproblems (\ref{eq:L1_sub_y}),
(\ref{eq:L1_sub_Z}) and (\ref{eq:L1_sub_X}) are similar to those
used in (\ref{eq:sub_y}),
(\ref{eq:sub_Z}) and (\ref{eq:sub_X}). To solve subproblem (\ref{eq:L1_sub_theta}),
we rearrange the terms of $\mathcal{L}\left({\bf y}^{k+1},\boldsymbol{\theta},Z^{k},X^{k},G^{k}\right)$
and obtain an eqivalent problem
\[
\min_{\boldsymbol{\theta}_{ij}}-\left\langle \boldsymbol{\theta}_{ij},\vec{c}_{ij}^{T}\right\rangle +\frac{\mu}{2}\left\Vert \boldsymbol{\theta}_{ij}\vec{c}_{ji}+\Phi_{ij}\right\Vert _{F}^{2},\text{ s.t. }\left\Vert \boldsymbol{\theta}_{ij}\right\Vert \leq1,
\]
where $\Phi=X^{k}+\mathcal{A}^{*}\left({\bf y}^{k+1}\right)-Z^{k}+\frac{1}{\mu}G^{k}$
, $\Phi=\left(\begin{array}{cc}
\Phi^{11} & \Phi^{12}\\
\Phi^{21} & \Phi^{22}
\end{array}\right)$ and $\Phi_{ij}=\left(\begin{array}{cc}
\Phi_{ij}^{11} & \Phi_{ij}^{12}\\
\Phi_{ij}^{21} & \Phi_{ij}^{22}
\end{array}\right)$. Problem (\ref{eq:L1_sub_theta_1}) is further simplified as
\[
\min_{\boldsymbol{\theta}_{ij}}\left\langle \boldsymbol{\theta}_{ij},\mu\Phi_{ij}\vec{c}_{ji}^{T}-\vec{c}_{ij}^{T}\right\rangle +\frac{\mu}{2}\left\Vert \boldsymbol{\theta}_{ij}\right\Vert ^{2},\text{ s.t. }\left\Vert \boldsymbol{\theta}_{ij}\right\Vert \leq1,
\]
whose solution is
\[
\boldsymbol{\theta}_{ij}=\begin{cases}
\frac{1}{\mu}\vec{c}_{ij}^{T}-\Phi_{ij}\vec{c}_{ij}^{T}&\text{if } \left\Vert\frac{1}{\mu}\vec{c}_{ij}^{T}-\Phi_{ij}\vec{c}_{ij}^{T}\right\Vert\leq1,\\
\frac{\vec{c}_{ij}^{T}-\mu\Phi_{ij}\vec{c}_{ij}^{T}}{\left\Vert \vec{c}_{ij}^{T}-\mu\Phi_{ij}\vec{c}_{ij}^{T}\right\Vert }& \text{ otherwise.}
\end{cases}
\]

The practical issues related to how to
take advantage of low-rank assumption of $G$ in the eigenvalue decomposition
performed at each iteration, strategies for adjusting the penalty
parameter $\mu$, the use of a step size $\gamma$ for updating the
primal variable $X$ and termination rules using the in-feasibility
measures are discussed in details in \cite{ADM}.
The convergence analysis on ADMM using more than two blocks of variables can be found
in \cite{HongLuo2012}. However, there is one condition of Assumption A (page 5) in \cite{HongLuo2012} that cannot be satisfied for our problem: the condition that the feasible set should be polyhedral, whereas the SDP cone in our problem is not a polyhedral. To generalize the convergence analysis in \cite{HongLuo2012} to our problem, we will need to show that the local error bounds (page 8 - 9 in \cite{HongLuo2012}) hold for the SDP cone. Currently we do not  have a rigorous convergence proof for ADMM for our problem.

\section{The Iterative Reweighted Least Squares (IRLS) Procedure}
\label{sec:IRLS}

Since $\vec{c}_{ij}$ and $\vec{c}_{ji}$ are unit vectors, it is tempting to replace the LUD problem (\ref{eq:lad-exact_form2}) with the following semidefinite relaxation:
\begin{eqnarray}
 & \min_{G\in\mathbb{R}^{2K\times2K}} & F(G) =\sum_{i,j=1,2,\ldots,K}\sqrt{2-2\sum_{p,q=1,2}G_{ij}^{pq}S_{ij}^{pq}}\label{eq:sdp_algorithm1}\\
& \text{s.t. }& G_{ii} = I_2,\text{ }i=1,2,\cdots,K,\label{eq:sdp_orth_constraint_algorithm1}\\
 &  & G\succcurlyeq0,\label{eq:algorithm1}\\
& & \left\Vert G\right\Vert _{2}\leq\alpha K \text{ (optional),}\label{eq:optional_alpha}
\end{eqnarray}
where $\alpha$ is a fixed number between $\frac{2}{3}$ and $1$, and the spectral
norm constraint on $G$ (\ref{eq:optional_alpha}) is added when the solution to
the problem (\ref{eq:sdp_algorithm1})-(\ref{eq:algorithm1}) is a set of highly
clustered rotations. Notice that this relaxed problem is, however, not convex
since the objective function (\ref{eq:sdp_algorithm1}) is concave. We propose
 to solve (\ref{eq:sdp_algorithm1})-(\ref{eq:algorithm1}) (possibly with
(\ref{eq:optional_alpha})) by an variant of the IRLS procedure
\cite{irls_d,irls_t,CandesWakin-Boyd2008}, which at best converges to a local minimizer. With a good initial guess for $G$ it can be hoped that the global minimizer is obtained. Such an initial guess can be taken as the LS solution.

\begin{algorithm}
\caption{\label{alg:Algorithm_1} ({\bf the IRLS procedure}) Solve optimization problem (\ref{eq:sdp_algorithm1})-(\ref{eq:algorithm1}) (with the spectral norm constraint on $G$ (\ref{eq:optional_alpha}) if the input parameter $\alpha$ satisfies $\frac{2}{3}\leq \alpha < 1$), and then recover the orientations by rounding.}
\begin{algorithmic}
\Require a $2K\times2K$ common-line matrix $S$, a regularization parameter $\epsilon$, a parameter $\alpha$ and the total number of iterations $N_{\text{iter}}$
   \State $w_{ij}^0=1$ $\forall i,j = 1,\cdots,K$;
   \State $G^0=0$;
   \For{$k = 1 \to N_{\text{iter}}$, step size = 1}
   \State  update $W$ by setting $w_{ij} = w_{ij}^{k-1}$;
   \State if $\frac{2}{3}\leq \alpha < 1$, obtain $G^k$ by solving the problem (\ref{eq:adm_obj})-(\ref{eq:adm_spectral}) using ADMM; otherwise, obtain $G^k$ by solving (\ref{eq:sdp})\textendash{}(\ref{eq:sdp_constraint}) using SDPLR (with initial guess $G^{k-1}$);
  \State $r_{ij}^k=\sqrt{2-2\sum_{p,q=1,2}G_{ij}^{pq}S_{ij}^{pq} + \epsilon^2}$;
  \State $w^k_{ij}=1/ r_{ij}^k$;
  \State the residual $r^k=\sum_{i,j=1}^K r_{ij}^k$;
\EndFor
\State obtain estimated orientations $\hat{R}_1,\ldots,\hat{R}_K$ from $G^{N_{\text{iter}}}$ using the randomized rounding procedure in section \ref{sec:rounding}.
\end{algorithmic}
\end{algorithm}
Before the rounding procedure, the IRLS procedure finds an approximate solution to the optimization problem (\ref{eq:sdp_algorithm1})-(\ref{eq:algorithm1}) (possibly with (\ref{eq:optional_alpha})) by solving its smoothing version
\begin{eqnarray}
 & \min_{G\in\mathbb{R}^{2K\times2K}} & F(G,\epsilon)=\sum_{i,j=1,2,\ldots,K}\sqrt{2-2\sum_{p,q=1,2}G_{ij}^{pq}S_{ij}^{pq} + \epsilon^2}\label{eq:irls_epsilon_obj}\\
&\text{s.t. } & G_{ii} = I_2,\text{ }i=1,2,\cdots,K,\label{eq:sdp_orth_constraint_irls}\\
 &  & G\succcurlyeq0,\label{eq:irls_sdp}\\
& & \left\Vert G\right\Vert _{2}\leq\alpha K \text{ (optional).}\label{eq:irls_optional_alpha}
\end{eqnarray}
where $\epsilon>0$ is a small number. The solution to the smoothing version is close to the solution to the original problem. In fact, let $G^*_{\epsilon} = \arg\min F(G,\epsilon)$ and $G^* = \arg\min F(G)$, then we shall verify that
\begin{equation}
\left| F(G^*_{\epsilon}) - F(G^*) \right| \leq 4K^2 \epsilon.
\label{eq:closeness}
\end{equation}
Using the fact that 
\[
0\leq F(G,\epsilon)-F(G)<4K^{2}\epsilon,
\]
we obtain 
\begin{eqnarray*}
 & (F(G^{*},\epsilon)-F(G_{\epsilon}^{*},\epsilon))+(F(G_{\epsilon}^{*})-F(G^{*}))\\
= & (F(G^{*},\epsilon)-F(G^{*}))-(F(G_{\epsilon}^{*},\epsilon)-F(G_{\epsilon}^{*})) & \text{\ensuremath{\leq}}4K^{2}\epsilon.
\end{eqnarray*}
Since $F(G^{*},\epsilon)-F(G_{\epsilon}^{*},\epsilon)\geq0$ and $F(G_{\epsilon}^{*})-F(G^{*})\geq0$, the inequality (\ref{eq:closeness}) holds.

In each iteration, the IRLS procedure solves the problem
\begin{equation}
G^{k+1}=\arg\min_{G\succcurlyeq0}\sum_{i\neq j}w_{ij}^{k}\left(2-2\left\langle G_{ij},S_{ij}\right\rangle +\epsilon^{2}\right)\text{ s.t. }\mathcal{A}(G)={\bf b},(\text{optional:} \left\Vert G\right\Vert _{2}\leq\alpha K  )\label{eq:irls_1}
\end{equation}
on the $(k+1)$th iteration, where $w_{ij}^{0}=1$, and
\[
w_{ij}^{k}=1/\sqrt{2-2\left\langle G_{ij}^{k},S_{ij}\right\rangle +\epsilon^{2}},\:\forall k>0.
\]
In other words, in each iteration, more emphasis is given to detected common-lines that are better explained
by the current estimate $G^k$ of the Gram matrix. The inclusion of the regularization parameter
$\epsilon$ ensures that no single detected common-line can gain undue
influence when solving
\begin{equation}
G^{k+1}=\arg\max_{G\succcurlyeq0}\left\langle W^{k}\circ S,G\right\rangle \text{ s.t. }\mathcal{A}(G)={\bf b} \text{ (optional:} \left\Vert G\right\Vert _{2}\leq\alpha K  ).\label{eq:irls_2}
\end{equation}
We repeat the process until the residual sequence $\{r^k\}$ has converged, or the maximum number of iterations has been reached. We shall verify that the value of the cost function is non-increasing, and that every cluster point of the sequence of IRLS is a stationary point of (\ref{eq:irls_epsilon_obj}) - (\ref{eq:irls_sdp}) in the following lemma and theorem, for the problem without the spectral norm constraint on $G$. The arguments can be generalized to the case with
the spectral norm constraint. The proof of Theorem \ref{thm:convergence} follows the method of proof for Theorem 3  in the paper \cite{Mohan2012} by Mohan et. al..

\begin{lemma}\label{lem:F_decreasing}
The value of the cost function sequence is monotonically non-increasing, i.e., 
\begin{equation}
F(G^{k+1},\epsilon)\leq F(G^{k},\epsilon).\label{eq:ineqn_1}
\end{equation}
where $\left\{ G^{k}\right\} $ is the sequence generated by the IRLS procedure of Algorithm \ref{alg:Algorithm_1}.
\end{lemma}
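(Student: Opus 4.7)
The plan is to use the standard majorization–minimization argument that underlies IRLS. Define the residual $r_{ij}(G) := \sqrt{2 - 2\langle G_{ij}, S_{ij}\rangle + \epsilon^2}$ so that $F(G,\epsilon) = \sum_{i\neq j} r_{ij}(G)$ and $r_{ij}^k = r_{ij}(G^k)$. Introduce the quadratic surrogate
$$Q_k(G) \;:=\; \frac{1}{2}\sum_{i\neq j} w_{ij}^k\, r_{ij}(G)^2 \;=\; \sum_{i\neq j}\frac{r_{ij}(G)^2}{2\,r_{ij}(G^k)}.$$
Up to an additive constant (involving $\epsilon$) and the harmless factor $1/2$, $Q_k$ is exactly the objective minimized in the IRLS subproblem (\ref{eq:irls_1}) used to produce $G^{k+1}$, over the same feasible set.

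The key step is the elementary inequality $\sqrt{x} \leq \frac{x+y}{2\sqrt{y}}$ for $x,y>0$, which follows from $(\sqrt{x}-\sqrt{y})^2 \geq 0$. Applied with $x = r_{ij}(G)^2$ and $y = r_{ij}(G^k)^2$ it yields
$$r_{ij}(G) \;\leq\; \frac{r_{ij}(G)^2}{2\,r_{ij}(G^k)} + \frac{r_{ij}(G^k)}{2}.$$
Summing over $i\neq j$ gives the majorization
$$F(G,\epsilon) \;\leq\; Q_k(G) + \tfrac{1}{2} F(G^k,\epsilon),$$
with equality at $G = G^k$ since $Q_k(G^k) = \tfrac12\sum_{i\neq j} r_{ij}(G^k) = \tfrac12 F(G^k,\epsilon)$.

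I would then close the argument in one line. Because $G^{k+1}$ minimizes $Q_k$ over the feasible set defined by (\ref{eq:sdp_orth_constraint_irls})–(\ref{eq:irls_sdp}), and $G^k$ is itself feasible, we have $Q_k(G^{k+1}) \leq Q_k(G^k) = \tfrac12 F(G^k,\epsilon)$. Evaluating the majorization at $G = G^{k+1}$ therefore gives
$$F(G^{k+1},\epsilon) \;\leq\; Q_k(G^{k+1}) + \tfrac{1}{2} F(G^k,\epsilon) \;\leq\; \tfrac{1}{2}F(G^k,\epsilon) + \tfrac{1}{2}F(G^k,\epsilon) \;=\; F(G^k,\epsilon),$$
which is (\ref{eq:ineqn_1}).

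The proof is essentially mechanical once the surrogate is written down; the only point that requires care is checking that $G^k$ lies in the feasible set of the IRLS subproblem so that $Q_k(G^{k+1}) \leq Q_k(G^k)$ is justified. This is immediate because the constraints $G_{ii}=I_2$, $G\succcurlyeq 0$ (and optionally $\|G\|_2\leq\alpha K$) do not change from iteration to iteration and $G^k$ is, by induction, the minimizer of the previous subproblem and hence feasible. The role of the smoothing parameter $\epsilon>0$ is to keep every $r_{ij}(G^k)$ bounded away from zero, ensuring that the weights $w_{ij}^k$ and the surrogate $Q_k$ are well-defined at every iteration.
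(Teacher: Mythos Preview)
Your proof is correct and follows the classical majorization--minimization template for IRLS. The paper reaches the same conclusion by a somewhat different route: it first invokes the KKT conditions for the SDP subproblem to deduce $\langle W^k\circ S,\,G^{k+1}-G^k\rangle \ge 0$ (which, after unpacking, is exactly your inequality $Q_k(G^{k+1})\le Q_k(G^k)$), and then applies the Cauchy--Schwarz inequality to the pair of sums $\sum_{i\neq j} r_{ij}^k$ and $\sum_{i\neq j}(r_{ij}^{k+1})^2/r_{ij}^k$ to obtain $F(G^{k+1},\epsilon)^2 \le F(G^k,\epsilon)^2$. Your argument is more elementary---it needs only the feasibility of $G^k$, the optimality of $G^{k+1}$, and the pointwise AM--GM bound $\sqrt{x}\le (x+y)/(2\sqrt{y})$, avoiding both the KKT machinery and Cauchy--Schwarz. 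The paper's detour through KKT is not wasted, however: those same optimality conditions and multipliers $({\bf y}^k,X^k)$ are reused in the proof of the subsequent Theorem~\ref{thm:convergence}, where one must pass to limits along subsequences and characterize cluster points as stationary points; for that purpose your surrogate argument alone would not suffice.
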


\begin{proof}
Since $G^{k}$ is the solution of (\ref{eq:irls_2}), there exists
${\bf y}^{k}\in\mathbb{R}^{2K}$ and $X^{k}\in\mathbb{R}^{2K\times2K}$
such that
\begin{gather}
-\mathcal{A}^{*}({\bf y}^{k})+X^{k}+W^{k-1}\circ S=0,\:\mathcal{A}(G^{k})-{\bf b}=0,\label{eq:eqn_y_X_G}\\
G^{k}\succcurlyeq0,\: X^{k}\succcurlyeq0,\:\left\langle G^{k},X^{k}\right\rangle =0.\label{eq:PSD_X_G}
\end{gather}
Hence we have
\begin{eqnarray}
0 & = & -({\bf y}^{k})^{T}(\mathcal{A}(G^{k})-{\bf b})+({\bf y}^{k+1})^{T}(\mathcal{A}(G^{k+1})-{\bf b})\nonumber \\
 & = & ({\bf y}^{k+1}-{\bf y}^{k})^{T}(\mathcal{A}(G^{k})-{\bf b})+\left\langle \mathcal{A}^{*}({\bf y}^{k+1}),G^{k+1}-G^{k}\right\rangle \nonumber \\
 & = & \left\langle X^{k+1}+W^{k}\circ S,G^{k+1}-G^{k}\right\rangle \nonumber \\
 & \leq & \left\langle W^{k}\circ S,G^{k+1}-G^{k}\right\rangle \label{eq:ineqn_F}\\
 & = & \frac{1}{2}\sum_{i\neq j}\left(-\beta_{ij}^{k}\left(2-2\left\langle G_{ij}^{k+1},S_{ij}\right\rangle +\epsilon^{2}\right)+\beta_{ij}^{k}\left(2-2\left\langle G_{ij}^{k},S_{ij}\right\rangle +\epsilon^{2}\right)\right)\nonumber \\
 & = & \frac{1}{2}\sum_{i\neq j}\left(-\frac{2-2\left\langle G_{ij}^{k+1},S_{ij}\right\rangle +\epsilon^{2}}{\sqrt{2-2\left\langle G_{ij}^{k},S_{ij}\right\rangle +\epsilon^{2}}}+\sqrt{2-2\left\langle G_{ij}^{k},S_{ij}\right\rangle +\epsilon^{2}}\right),\label{eq:ineqn_2}
\end{eqnarray}
where the third equality uses (\ref{eq:eqn_y_X_G}), and the inequality
(\ref{eq:ineqn_F}) uses (\ref{eq:PSD_X_G}). From (\ref{eq:ineqn_2})
we obtain
\begin{align}
F(G^{k},\epsilon)^{2} & =\left(\sum_{i\neq j}\sqrt{2-2\left\langle G_{ij}^{k},S_{ij}\right\rangle +\epsilon^{2}}\right)^{2}\nonumber \\
 & \geq\left(\sum_{i\neq j}\sqrt{2-2\left\langle G_{ij}^{k},S_{ij}\right\rangle +\epsilon^{2}}\right)\left(\sum_{i\neq j}\frac{2-2\left\langle G_{ij}^{k+1},S_{ij}\right\rangle +\epsilon^{2}}{\sqrt{2-2\left\langle G_{ij}^{k},S_{ij}\right\rangle +\epsilon^{2}}}\right)\nonumber \\
 & \geq\left(\sum_{i\neq j}\sqrt{2-2\left\langle G_{ij}^{k+1},S_{ij}\right\rangle +\epsilon^{2}}\right)^{2}=F(G^{k+1},\epsilon)^{2},\label{eq:ineqn_3}
\end{align}
where the last inequality uses Cauchy-Schwarz inequality and the equality
holds if and only if
\begin{equation}
\frac{\sqrt{2-2\left\langle G_{ij}^{k+1},S_{ij}\right\rangle +\epsilon^{2}}}{\sqrt{2-2\left\langle G_{ij}^{k},S_{ij}\right\rangle +\epsilon^{2}}}=c\text{ for all }i\neq j,\label{eq:eq_condition}
\end{equation}
where $c$ is a constant. Thus (\ref{eq:ineqn_1}) is confirmed.
\end{proof}

\begin{theorem}
\label{thm:convergence}
The sequence of iterates $\left\{ G^{k}\right\} $ of IRLS is bounded,
and every cluster point of the sequence is a stationary point of (\ref{eq:irls_epsilon_obj}) - (\ref{eq:irls_sdp}).
\end{theorem}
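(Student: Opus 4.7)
The plan is to adopt the Mohan et al.\ template: first establish compactness of the iterate sequence, then sharpen the monotonicity of Lemma~\ref{lem:F_decreasing} to a descent inequality that couples the decrease $F(G^k,\epsilon)-F(G^{k+1},\epsilon)$ to a linear functional of the displacement $G^{k+1}-G^k$, and finally pass to the limit in the KKT system of the weighted subproblem~(\ref{eq:irls_2}) to recover the first-order stationarity conditions of (\ref{eq:irls_epsilon_obj})--(\ref{eq:irls_sdp}). Boundedness is immediate: for $G\succcurlyeq 0$ one has $|G^{pq}_{ij}|^2\le G^{pp}_{ii}G^{qq}_{jj}$, and the constraint $G_{ii}=I_2$ forces every diagonal entry of $G^k$ to equal $1$, so $|G^{pq}_{ij}|\le 1$ throughout, and $\{G^k\}$ lies in a fixed compact subset of $\mathbb{R}^{2K\times 2K}$.

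To derive the descent inequality I would use concavity of $\sqrt{\,\cdot\,}$ in the form $\sqrt{y}-\sqrt{x}\le (y-x)/(2\sqrt{x})$ for $x>0$, $y\ge 0$. Applied termwise with $x=2-2\langle G^k_{ij},S_{ij}\rangle+\epsilon^{2}$ and $y=2-2\langle G^{k+1}_{ij},S_{ij}\rangle+\epsilon^{2}$ and summed over $i\neq j$, this gives
\begin{equation}
F(G^{k+1},\epsilon)-F(G^k,\epsilon)\le -\langle W^k\circ S,\, G^{k+1}-G^k\rangle.
\end{equation}
Because $G^{k+1}$ is an optimizer of (\ref{eq:irls_2}) and $G^k$ is feasible, $\langle W^k\circ S,\, G^{k+1}-G^k\rangle\ge 0$, so
\begin{equation}
0\le \langle W^k\circ S,\, G^{k+1}-G^k\rangle \le F(G^k,\epsilon)-F(G^{k+1},\epsilon).
\end{equation}
Combining with Lemma~\ref{lem:F_decreasing} and $F(G^k,\epsilon)\ge 0$, the sequence $F(G^k,\epsilon)$ converges, and telescoping forces $\langle W^k\circ S,\, G^{k+1}-G^k\rangle\to 0$.

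Now let $G^{k_n}\to G^*$ be any convergent subsequence. By compactness, extract a further subsequence (still indexed by $k_n$) with $G^{k_n+1}\to \bar G$. Because $\sqrt{2-2\langle G_{ij},S_{ij}\rangle+\epsilon^{2}}\ge \epsilon>0$ uniformly, the weight map $G\mapsto W(G)$ is continuous, so $W^{k_n}\to W^*$ with $W^*_{ij}=1/\sqrt{2-2\langle G^*_{ij},S_{ij}\rangle+\epsilon^{2}}$. Passing to the limit in the defining optimality $\langle W^{k_n}\circ S, G^{k_n+1}\rangle \ge \langle W^{k_n}\circ S, G\rangle$ for every feasible $G$, and using closedness of the feasible set, $\bar G$ maximizes $\langle W^*\circ S, G\rangle$ over $\{G\succcurlyeq 0:\mathcal A(G)={\bf b}\}$. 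Combined with $\langle W^*\circ S,\, \bar G - G^*\rangle = 0$ from the previous step, one gets $\langle W^*\circ S, G^*\rangle = \langle W^*\circ S, \bar G\rangle$, so $G^*$ itself is optimal for this limiting linear SDP. Writing its KKT conditions yields multipliers $y^*$ and $X^*\succcurlyeq 0$ with $\mathcal A^*(y^*)=X^*+W^*\circ S$, $\mathcal A(G^*)={\bf b}$, and $\langle X^*, G^*\rangle = 0$. Since $F(\cdot,\epsilon)$ is smooth on the feasible set (thanks to $\epsilon>0$) with $\nabla_G F(G^*,\epsilon)=-W^*\circ S$, these coincide with the first-order stationarity conditions of (\ref{eq:irls_epsilon_obj})--(\ref{eq:irls_sdp}).

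The main obstacle is the descent inequality: Lemma~\ref{lem:F_decreasing} by itself only gives $F(G^{k+1})\le F(G^k)$ and does not couple the decrease to $G^{k+1}-G^k$. Without such a coupling one cannot assert that, along the subsequence, $G^*$ inherits the limiting maximizer property from $\bar G$, which is exactly what is needed to identify $G^*$ as a KKT point. The tangent-line bound for $\sqrt{\,\cdot\,}$ supplies the missing link, and it is the only nontrivial ingredient not already contained in the proof of Lemma~\ref{lem:F_decreasing}.
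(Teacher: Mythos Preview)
Your argument is correct and follows the same Mohan et al.\ template as the paper, but it differs from the paper's proof in two technical steps, and in both places your route is arguably cleaner.

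First, to link the objective decrease with the linear gap $\langle W^{k}\circ S, G^{k+1}-G^{k}\rangle$, the paper reuses the Cauchy--Schwarz computation from the proof of Lemma~\ref{lem:F_decreasing} together with its equality case (\ref{eq:eq_condition}), and then argues by contradiction: if the cluster point $\bar G$ were not a maximizer of the limiting weighted SDP, the strict inequality $\langle \bar W\circ S,\hat G\rangle>\langle \bar W\circ S,\bar G\rangle$ would force $F(\hat G,\epsilon)<F(\bar G,\epsilon)$, contradicting convergence of $\{F(G^{k},\epsilon)\}$. Your tangent-line bound $\sqrt{y}-\sqrt{x}\le (y-x)/(2\sqrt{x})$ gives the quantitative sandwich
\[
0\le \langle W^{k}\circ S,\,G^{k+1}-G^{k}\rangle \le F(G^{k},\epsilon)-F(G^{k+1},\epsilon),
\]
so telescoping yields $\langle W^{k}\circ S, G^{k+1}-G^{k}\rangle\to 0$ directly, and you avoid the contradiction detour entirely.

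Second, to produce the KKT multipliers at the cluster point, the paper first proves that the dual iterates $\{{\bf y}^{k}\}$ and $\{X^{k}\}$ are bounded (using strong duality and the block structure of $\mathcal A^{*}$) and then passes to the limit in (\ref{eq:eqn_y_X_G})--(\ref{eq:PSD_X_G}). You instead observe that once $G^{*}$ is a maximizer of the limiting linear SDP $\max\{\langle W^{*}\circ S,G\rangle:\mathcal A(G)={\bf b},\,G\succcurlyeq 0\}$, Slater's condition (e.g.\ $G=I_{2K}$ is strictly feasible) furnishes multipliers $({\bf y}^{*},X^{*})$ directly; since $\nabla_{G}F(G^{*},\epsilon)=-W^{*}\circ S$, these are exactly the stationarity conditions of (\ref{eq:irls_epsilon_obj})--(\ref{eq:irls_sdp}). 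This sidesteps the somewhat delicate boundedness analysis of the dual sequence. The price is that your multipliers are not obtained as limits of the algorithmic dual iterates, but the theorem as stated does not require that.
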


\begin{proof}
Since trace$(G^{k})=2K$ and $G^{k}\succcurlyeq0$, the sequence $\left\{ G^{k}\right\} $
is bounded. It follows that $W^{k}$ and trace$((W^{k}\circ S)G^{k+1})$
are bounded. Using the strong duality of SDP, we conclude that ${\bf b}^{T}{\bf y}^{k+1}=$trace$((W^{k}\circ S)G^{k+1})$
is bounded. In addition, from the KKT conditions (\ref{eq:eqn_y_X_G}) - (\ref{eq:PSD_X_G}) we
obtain $-\mathcal{A}^{*}({\bf y}^{k+1})+W^{k}\circ S\succcurlyeq0$.
Using the definition of $\mathcal{A}^{*}$ and $S$, the property
of semi-definite matrices and the fact that $W^{k}$ is bounded, it
can be verified that ${\bf y}^{k}$ is bounded. Using (\ref{eq:eqn_y_X_G}) again,
we obtain
\[
\left\Vert X^{k}\right\Vert =\left\Vert \mathcal{A}^{*}({\bf y}^{k})-W^{k-1}\circ S\right\Vert \leq\left\Vert \mathcal{A}^{*}({\bf y}^{k})\right\Vert +\left\Vert W^{k-1}\circ S\right\Vert ,
\]
which implies that $X^{k}$ is bounded.

We now show that every cluster point of $\left\{ G^{k}\right\} $
is a stationary point of (\ref{eq:irls_epsilon_obj}) - (\ref{eq:irls_sdp}). Suppose to the contrary and
let $\bar{G}$ be a cluster point of $\left\{ G^{k}\right\} $ that
is not a stationary point. By the definition of cluster point, there
exists a subsequence $\left\{ G^{k_{i}},W^{k_{i}},X^{k_{i}},{\bf y}^{k_{i}}\right\} $
of $\left\{ G^{k},W^{k},X^{k},{\bf y}^{k}\right\} $ converging to
$\left(\bar{G},\bar{W},\bar{X},\bar{y}\right)$. By passing to a further
subsequence if necessary, we can assume that $\left\{ G^{k_{i}+1},W^{k_{i}+1},X^{k_{i}+1},{\bf y}^{k_{i}+1}\right\} $
is also convergent and we denote its limit by $\left(\hat{G},\hat{W},\hat{X},\hat{{\bf y}}\right)$.
$G^{k_{i}+1}$ is defined as (\ref{eq:irls_1}) or (\ref{eq:irls_2}) and satisfies the KKT
conditions (\ref{eq:eqn_y_X_G}) - (\ref{eq:PSD_X_G}). Passing to limits, we see that 

\begin{align*}
-\mathcal{A}^{*}(\hat{{\bf y}})+\hat{X}+\bar{W}\circ S=0, & \mathcal{A}(\hat{G})-{\bf b}=0,\\
\hat{G}\succcurlyeq0,\hat{X}\succcurlyeq0, & \left\langle \hat{G},\hat{X}\right\rangle =0.
\end{align*}
Thus we conclude that $\hat{G}$ is a maximizer of the following convex
optimization problem,
\[
\max_{G\succcurlyeq0}\left\langle \bar{W}\circ S,G\right\rangle \text{ s.t. }\mathcal{A}(\hat{G})={\bf b}.
\]
Next, by assumption, $\bar{G}$ is not a stationary point of (\ref{eq:irls_epsilon_obj}) - (\ref{eq:irls_sdp}). This implies that $\bar{G}$ is not a maximizer of the problem
above and thus $\left\langle \bar{W}\circ S,\hat{G}\right\rangle >\left\langle \bar{W}\circ S,\bar{G}\right\rangle $.
From this last relation and (\ref{eq:ineqn_3}) - (\ref{eq:eq_condition}) it follows that 
\begin{equation}
F(\hat{G},\epsilon)<F(\bar{G},\epsilon).\label{eq:contradiction}
\end{equation}
Otherwise if $F(\hat{G},\epsilon)=F(\bar{G},\epsilon)$, then $\left\langle \hat{G}_{ij},S_{ij}\right\rangle =\left\langle \bar{G}_{ij},S_{ij}\right\rangle $
due to (\ref{eq:ineqn_3}) - (\ref{eq:eq_condition}), and thus we would obtain $\left\langle \bar{W}\circ S,\hat{G}\right\rangle =\left\langle \bar{W}\circ S,\bar{G}\right\rangle $
which is a contradiction.

On the other hand, it follows from Lemma \ref{lem:F_decreasing} that the sequence $\left\{ F(G^{k},\epsilon)\right\} $
converges. Thus we have that
\[
\lim F(G^{k},\epsilon)=\lim F(G^{k_{i}},\epsilon)=F(\bar{G},\epsilon)=\lim F(G^{k_{i}+1},\epsilon)=F(\hat{G},\epsilon)
\]
which contradicts (\ref{eq:contradiction}). Hence, every cluster
point of the sequence is a stationary point of (\ref{eq:irls_epsilon_obj}) - (\ref{eq:irls_sdp}).
\end{proof}

In addition, using H\"{o}lder's inequality, the analysis can be generalized to the reweighted approach to solve
\begin{equation}
\min_{G\succcurlyeq0}\sum_{i\neq j}\left(2-2\left\langle G_{ij},S_{ij}\right\rangle \right)^{\frac{p}{2}}\text{ s.t. }\mathcal{A}(G)={\bf b},(\text{optional:} \left\Vert G\right\Vert _{2}\leq\alpha K  )\label{eq:irls_p}
\end{equation}
where $0 < p < 1$. Convergence analysis of IRLS for different applications with $p<1$ can be found in
\cite{irls_d,irls_t}. The problem (\ref{eq:irls_p}) is a SDR of the problem
\begin{equation}
\min_{R_{1},\ldots,R_{K}\in{\bf SO}\left(3\right)}\sum_{i\neq j}\left\Vert R_{i}\left(\vec{c}_{ij},0\right)^{T}-R_{j}\left(\vec{c}_{ji},0\right)^{T}\right\Vert^p.
\end{equation}
The smaller $p$ is, the more penalty the outliers in the detected common-lines
receive. 
\section{Numerical results}
\label{sec:numerical_results}

All numerical experiments were performed on a machine with 2 Intel(R)
Xeon(R) CPUs X5570, each with 4 cores, running at 2.93 GHz. In all the experiments, the polar Fourier transform of images for common-line detection had radial resolution $n_r =100$ and angular resolution $n_\theta=360$. The number of iterations was set to be $N_{\text{iter}} = 10$ in all IRLS procedures. The reconstruction from the images with estimated orientations used the Fourier based 3D reconstruction package FIRM\footnote{The FIRM package is available at \url{https://web.math.princeton.edu/~lanhuiw/software.html}.} \cite{FIRM}. The reconstructed volumes are shown in Figure \ref{fig:reconstructions} and \ref{fig:reconstruction_real} using the visualization system Chimera \cite{Chimera}.

To evaluate the accuracy or the resolution of the reconstructions, we used  the 3D Fourier
Shell Correlation (FSC) \cite{FSC}. FSC measures the normalized cross-correlation
coefficient between two 3D volumes over corresponding spherical shells
in Fourier space, i.e.,
\begin{equation}
\text{FSC}\left(i\right)=\frac{\sum_{{\bf j} \in Shell_i}{\mathcal F}\left({\bf V}_{1}\right)\left({\bf j}\right)\cdot \overline{{\mathcal F}\left({\bf V}_{2}\right)\left({\bf j}\right)}}{\sqrt{\sum_{{\bf j} \in Shell_i}\left|{\mathcal F}\left({\bf V}_{1}\right)\left({\bf j}\right)\right|^{2}\cdot\sum_{{\bf j} \in Shell_i}\left|{\mathcal F}\left({\bf V}_{2}\right)\left({\bf j}\right)\right|^{2}}},\label{eq:fsc}
\end{equation}
where ${\mathcal F}\left({\bf V}_{1}\right)$ and ${\mathcal F}\left({\bf V}_{2}\right)$ are the Fourier transforms of volume ${\bf V}_1$
and volume ${\bf V}_2$ respectively,  the spatial frequency $i$ ranges from $1$
to $N/2-1$ times the unit frequency $1/(N\cdot \text{pixel size})$, $N$ is the size of a volume, and $ Shell_i:=\{{\bf j}:0.5+(i-1)+\epsilon\leq\left\Vert {\bf j}\right\Vert < 0.5 + i +\epsilon\}$ where $\epsilon =$1e-4. In this form, the FSC takes two 3D volumes
and converts them into a 1D array. In Section \ref{sec:real_data}, we used the FSC 0.143 cutoff criterion \cite{FSC_cutoff_1, FSC_cutoff_2} to determine the resolutions of the ab-initio models and the refined models.

\begin{figure}
\begin{centering}
\includegraphics[width=0.5\paperwidth]{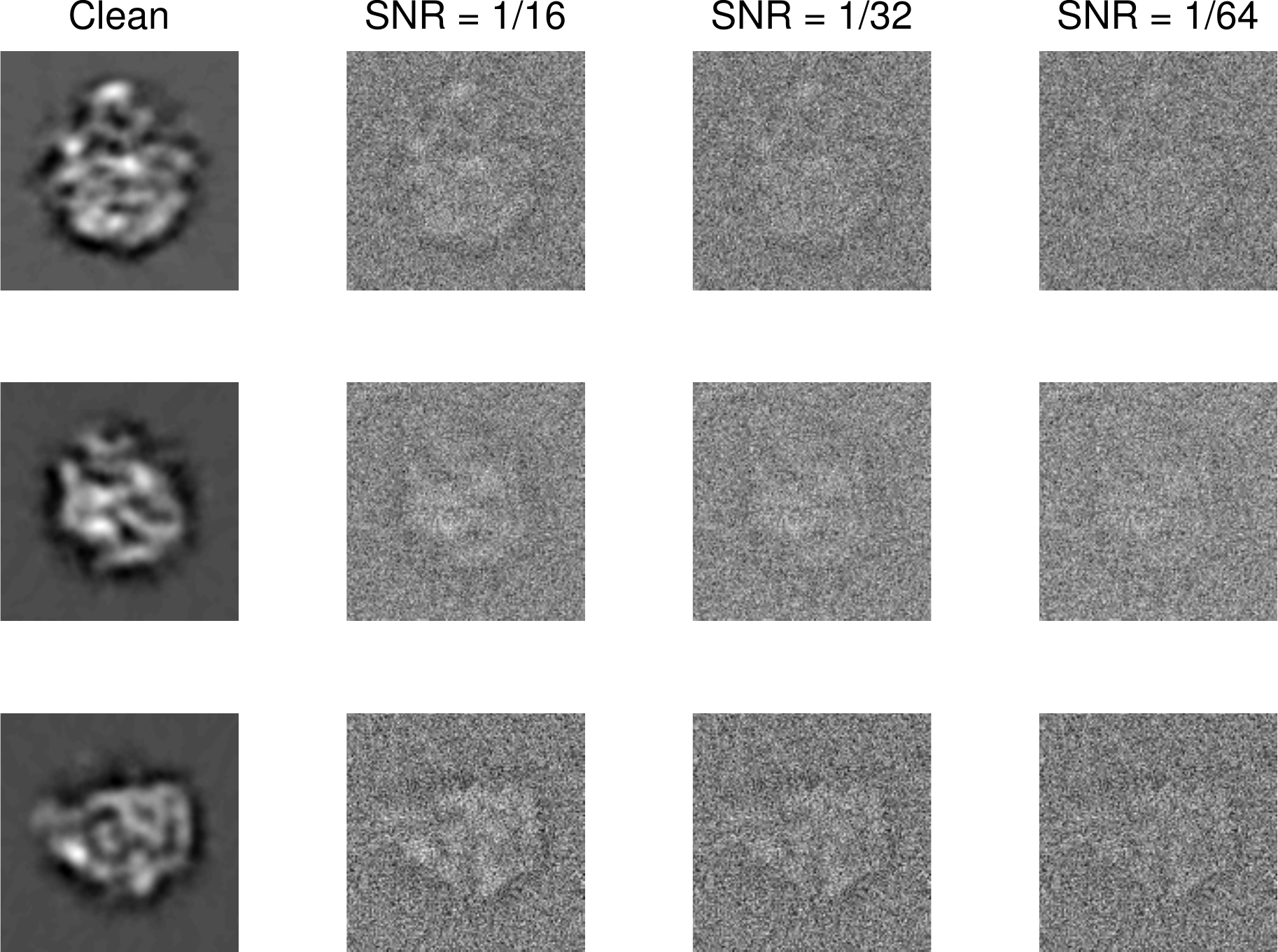}
\par\end{centering}
\caption{\label{fig:images}The first column shows three clean images of size $129 \times 129$ pixels generated from a 50S ribosomal subunit volume with different orientations. The other three columns show three noisy images corresponding to those in the first column with SNR= 1/16, 1/32 and 1/64, respectively.}
\end{figure}

\subsection{Experiments on simulated images}
\begin{figure}
\begin{centering}
\includegraphics[width=0.55\paperwidth]{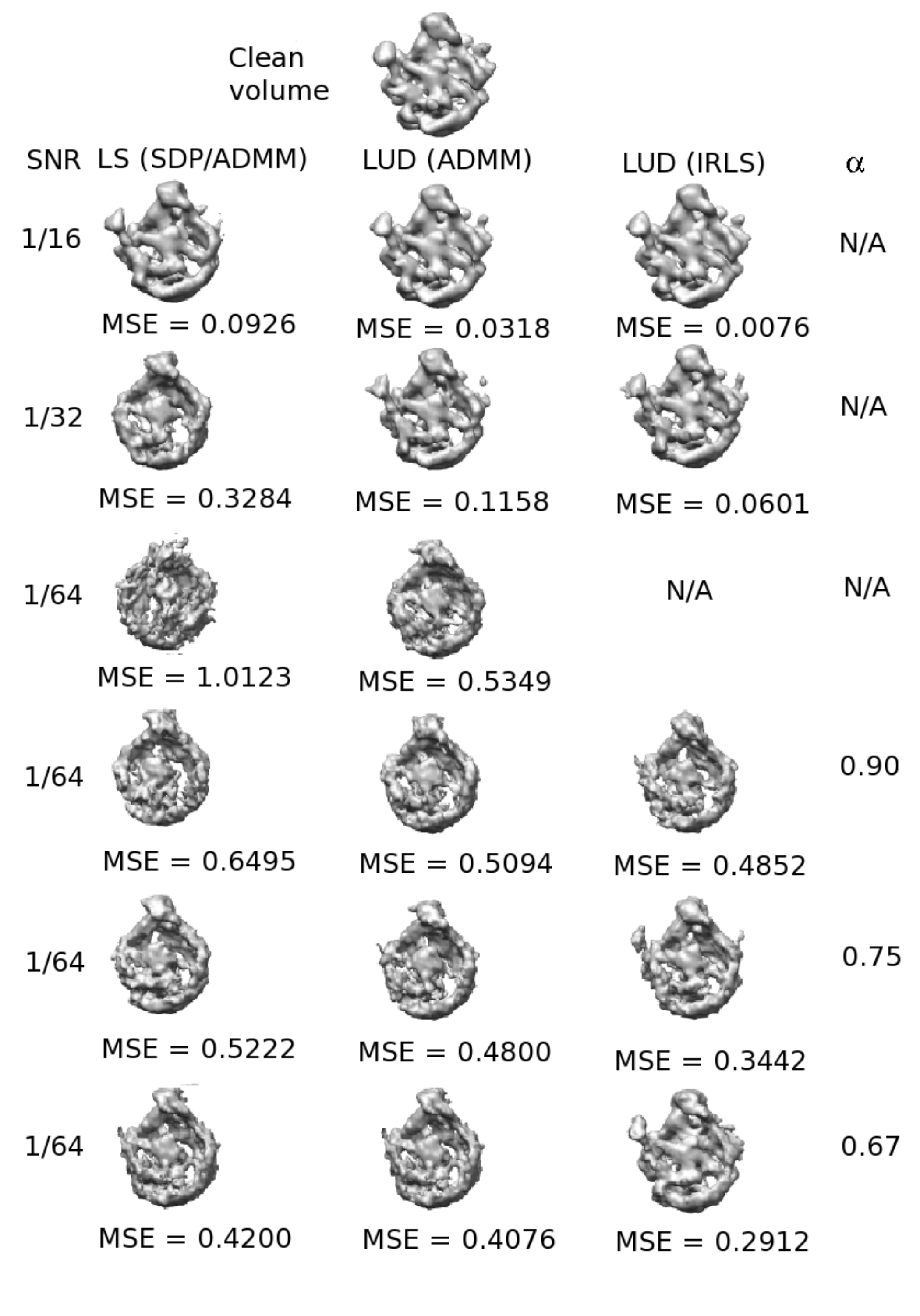}
\par\end{centering}
\caption{\label{fig:reconstructions}The clean volume (top), the reconstructed volumes and the MSEs (\ref{eq:MSE}) of the estimated rotations. From 2nd to 4th row, no spectral norm constraint was used (i.e., $\alpha$ = N/A) for all algorithms. The last 4 rows are all results of very noisy images with SNR = 1/64, where the result using the IRLS procedure without $\alpha$ is not available due to the highly clustered estimated projection directions, and the result from the IRLS procedure with $\alpha = 0.67$ for the spectral norm constraint is best.}
\end{figure}
We simulated $500$ centered images of size $129 \times 129$ pixels with pixel size $2.4$\AA $\text{ }$ of the 50S ribosomal
subunit (the top volume in Figure \ref{fig:reconstructions}), where the orientations of the images were sampled from the uniform
distribution over ${\bf SO}(3)$. White Gaussian noise was added to the clean images to generate noisy images with SNR= 1/16, 1/32 and 1/64 respectively (Figure \ref{fig:images}). Common-line pairs that were detected with an error smaller than $10^{\circ}$ were considered to be correct. The common-line
detection rates were $64\%$, $44\%$ and $23\%$ for images with SNR=1/16, 1/32 and 1/64 respectively (Figure \ref{fig:fat-tails}).

To measure the accuracy of the estimated orientations, we defined the mean squared error (MSE) of the estimated rotation matrices
$\hat{R}_{1},\ldots,\hat{R}_{K}$ as
\begin{equation}
\text{MSE}=\frac{1}{K}\sum_{i=1}^{K}\left\Vert R_{i}-\hat{O}\hat{R}_{i}\right\Vert ^{2},\label{eq:MSE}
\end{equation}
where $\hat{O}$ is the optimal solution to the registration problem
between the two sets of rotations $\left\{ R_{1},\ldots,R_{K}\right\} $
and $\left\{ \hat{R}_{1},\ldots,\hat{R}_{K}\right\} $ in the sense
of minimizing the MSE. As shown in \cite{Amit_eig_sdp}, there is a simple procedure to obtain both $\hat{O}$ and the MSE from the singular value decomposition of the matrix $\frac{1}{K}\sum_{i=1}^{K}\hat{R}_{i}R_{i}^{T}$.

We applied the LS approach using SDP and ADMM, and the LUD approach using ADMM and IRLS to estimate the images' orientations, then computed the MSEs of the estimated rotation matrices, and lastly reconstructed the volume (Figure \ref{fig:reconstructions}).
In order to measure the accuracy of the reconstructed volumes,
we measured each volume's FSC (\ref{eq:fsc}) (Figure \ref{fig:fsc}) against the clean 50S ribosomal subunit volume, that is, in our measurement ${\bf V}_1$ was the reconstructed volume, and ${\bf V}_2$ was the ``ground truth'' volume.

\begin{figure}
\begin{centering}
\includegraphics[width=0.6\paperwidth]{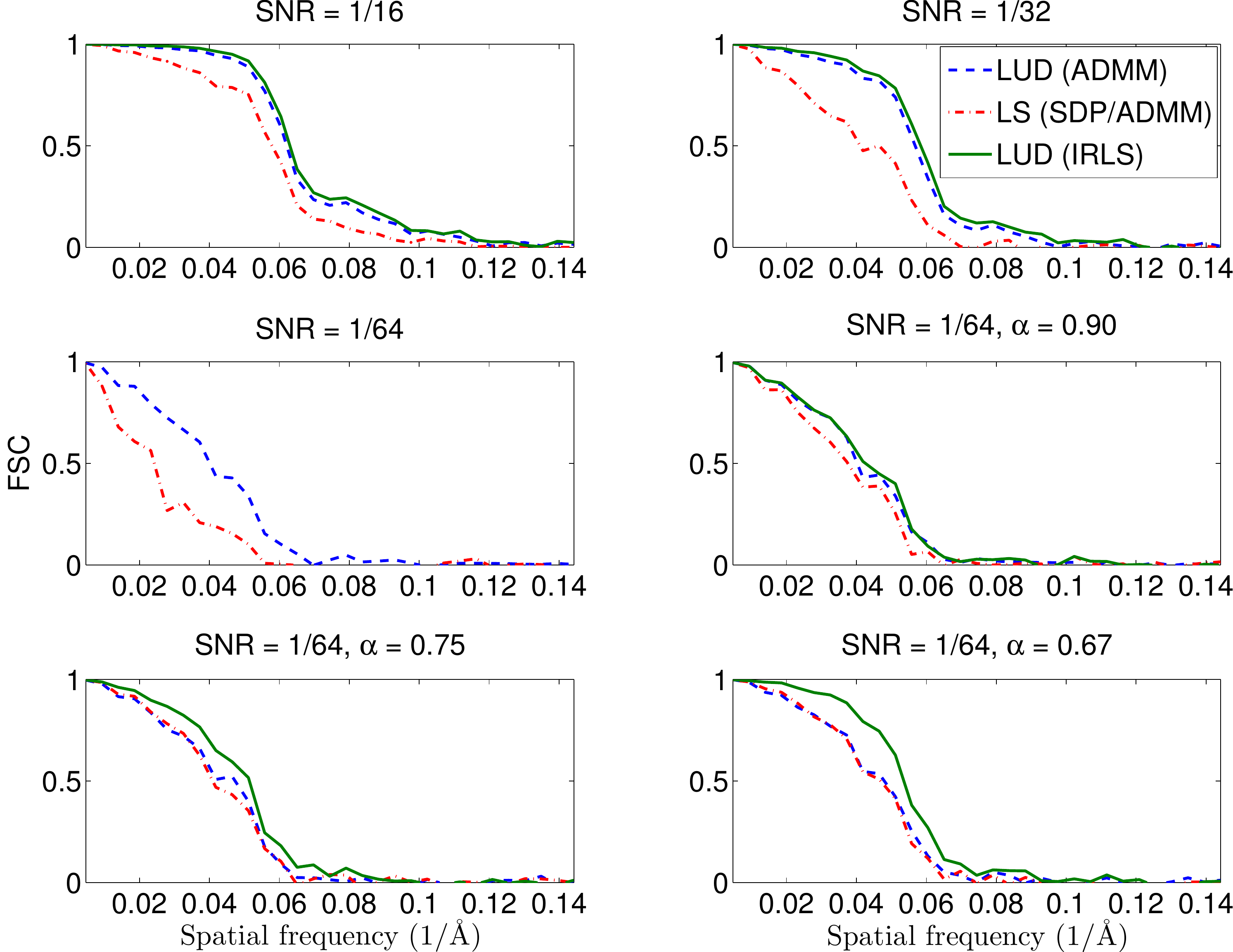}
\par\end{centering}
\caption{\label{fig:fsc}FSCs (\ref{eq:fsc}) of the reconstructed volumes against the clean volume in Figure \ref{fig:reconstructions}. The plots of the correlations show that the LUD approach using ADMM and IRLS (denoted as the blue dashed lines and green solid lines) outweighed the LS approach using SDP or ADMM (denoted as the red dot-dashed lines). Note that all the last four sub-figures are results for images with SNR = 1/64, where the last three sub-figures are results using different $\alpha$ for the spectral norm constraint. In the third sub-figure (left to right, top to bottom), there is no green solid line for the LUD approach using IRLS, since the IRLS procedure without the spectral norm constraint converges to a solution where the estimated projection directions are highly clustered and no 3D reconstruction can be computed.}
\end{figure}
When SNR= 1/16 and 1/32, the common-line detection rate was relatively high (64\%
and 44\%), the algorithms without the spectral  norm constraint on $G$ were enough
to make a good estimation. The LUD approach using ADMM and IRLS outweighed the LS
approach in terms of accuracy measured by MSE and FSC (Figure
\ref{fig:reconstructions}-\ref{fig:fsc}). Note that the LS approach using SDP
failed when SNR = 1 /32, while the LUD approach using either ADMM or IRLS
succeeded. When SNR=1/64, the common-line detection rate was relatively small
(23\%), and most of the detected common-lines were outliers (Figure
\ref{fig:fat-tails}), the algorithms without spectral norm constraint
$\left\Vert G\right\Vert_2 \leq \alpha K$ did not work. Especially, the viewing
directions of images estimated by the IRLS procedures without $\left\Vert
G\right\Vert_2 \leq \alpha K$ converged to two clusters around two antipodal
directions, yielding no 3D reconstruction. The LUD approach using ADMM
failed in this case, however, the IRLS procedure with an appropriate regularization on the spectral norm (i.e., $\alpha = 0.67$ since the true rotations were uniformly sampled over $SO(3)$) gave the best reconstruction.

\subsection{Experiments on a real dataset}
\label{sec:real_data}
\begin{figure}
\begin{centering}
\includegraphics[width=0.6\paperwidth]{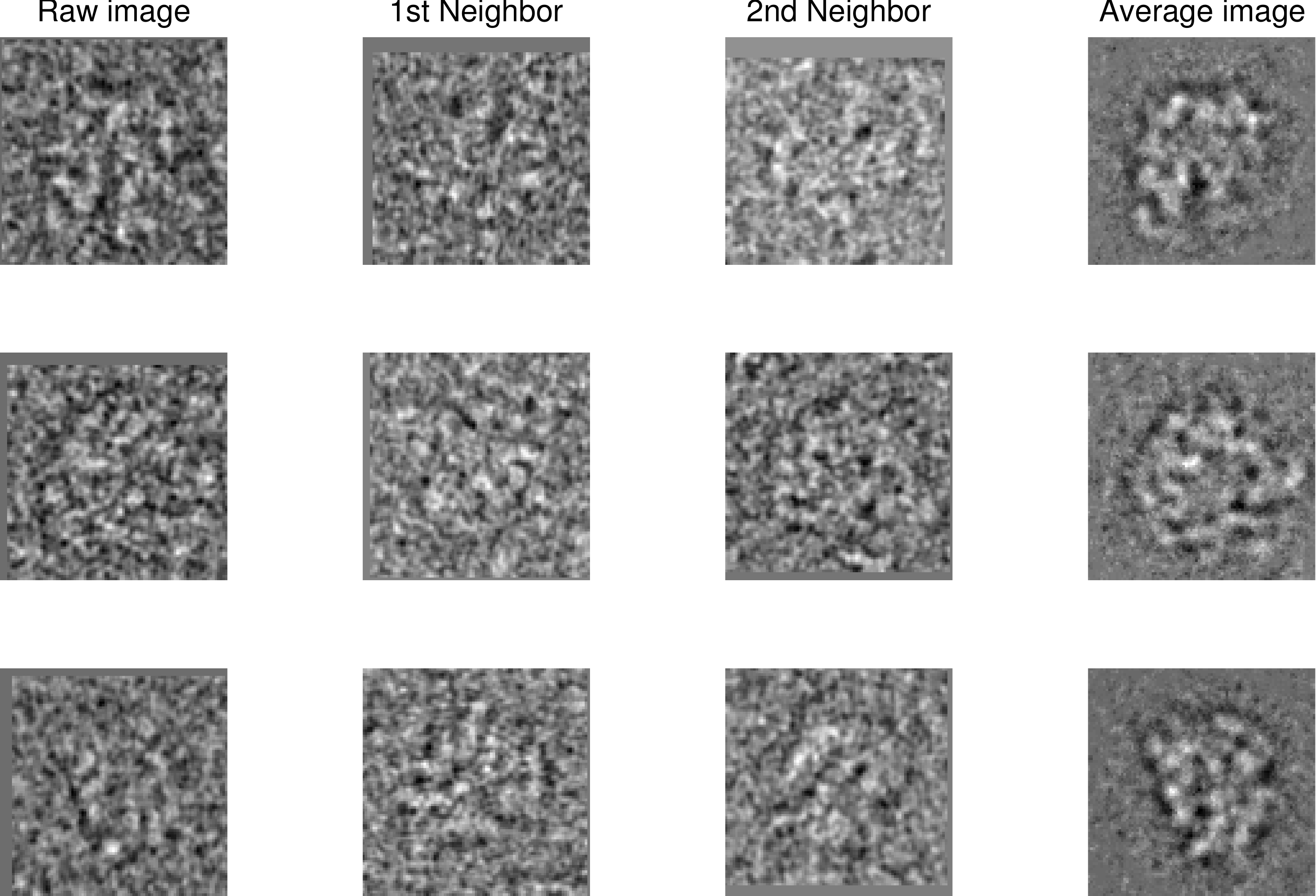}
\par\end{centering}
\caption{\label{fig:images_real}Noise reduction by image averaging. Three raw ribosomal images are shown in the first column. Their closest two neighbours (i.e., raw images having similar orientations after alignments) are shown in the second and third columns. The average images shown in the last column were obtained by averaging over $10$ neighbours of each raw image. }
\end{figure}
\begin{figure}
\centering
\subfloat[][\label{fig:initial_image}Initial models.]{
\includegraphics[width=0.3\paperwidth, angle = -90]{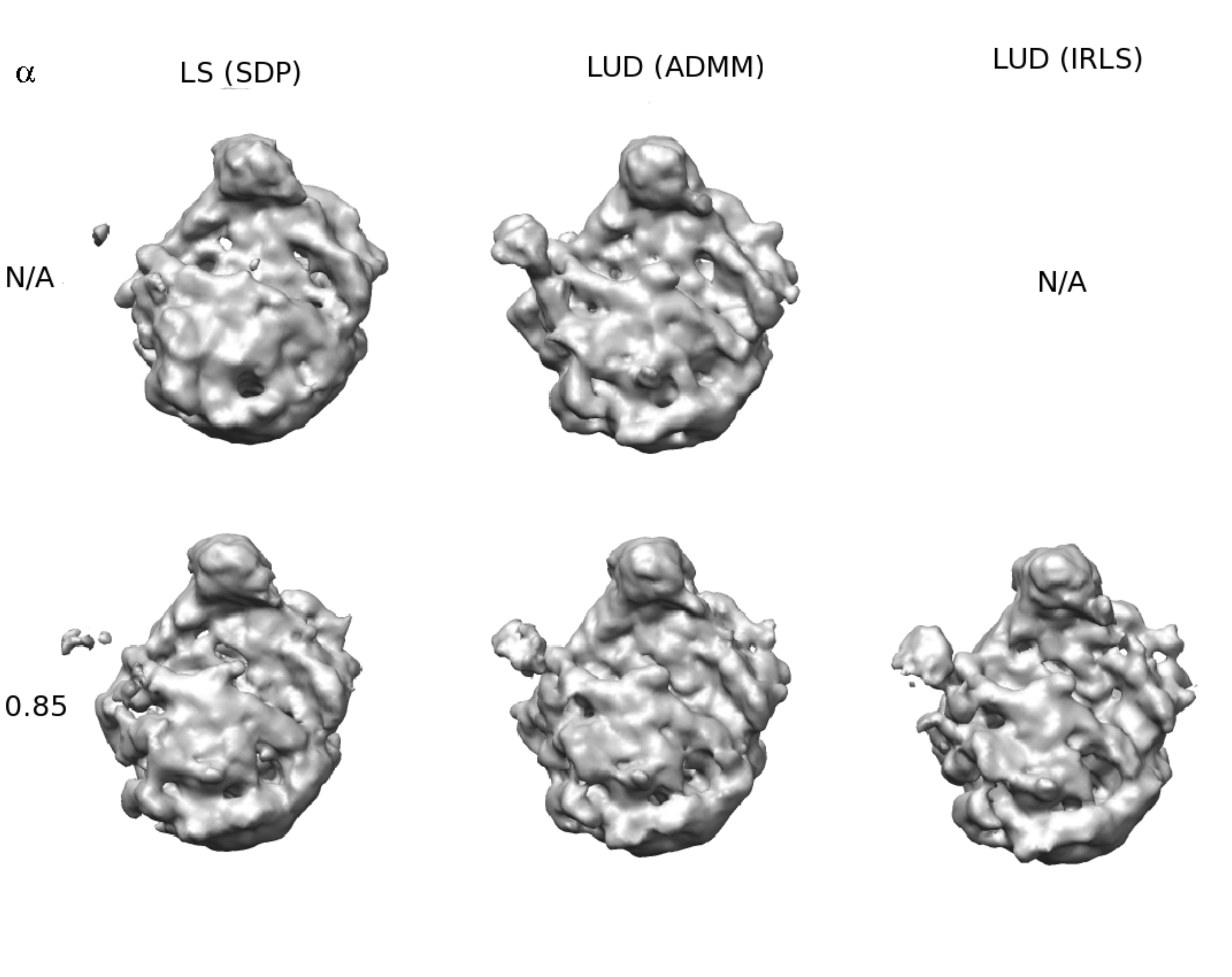}}\\
\subfloat[][\label{fig:final_image}Refined models.]{
\includegraphics[width=0.3\paperwidth, angle = -90]{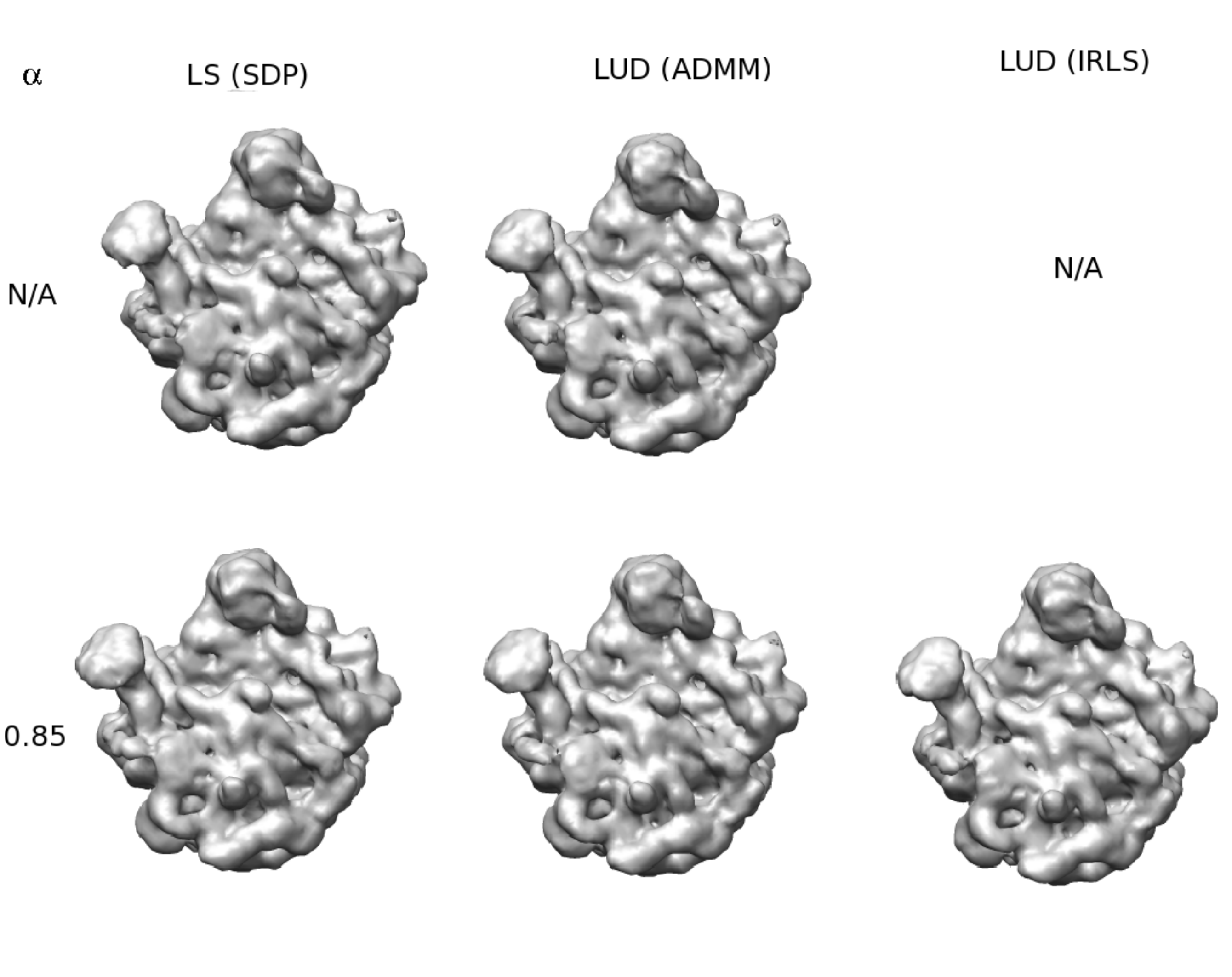}}
\caption{Initial models and refined models. (a)The ab-initio models estimated by merging two independent reconstructions, each obtained from $1000$ class averages. The resolutions of the models are 17.2\AA, 16.7\AA, 16.7\AA, 16.7\AA $\text{}$ and 16.1\AA $\text{}$ (from top to bottom, left to right) using the FSC $0.143$ resolution cutoff (Figure \ref{fig:fsc_real}). The model using the IRLS procedure without the spectral norm constraint  (i.e., $\alpha$ = N/A) is not available since the estimated projection directions are highly clustered. (b) The refined models corresponding to the ab-initio models in (a). The resolutions of the models are all 11.1\AA. }
\label{fig:reconstruction_real}
\end{figure}

\begin{figure}
  \centering
  \subfloat[][\label{fig:va}LS, SDP]{\includegraphics[width=.49\textwidth]{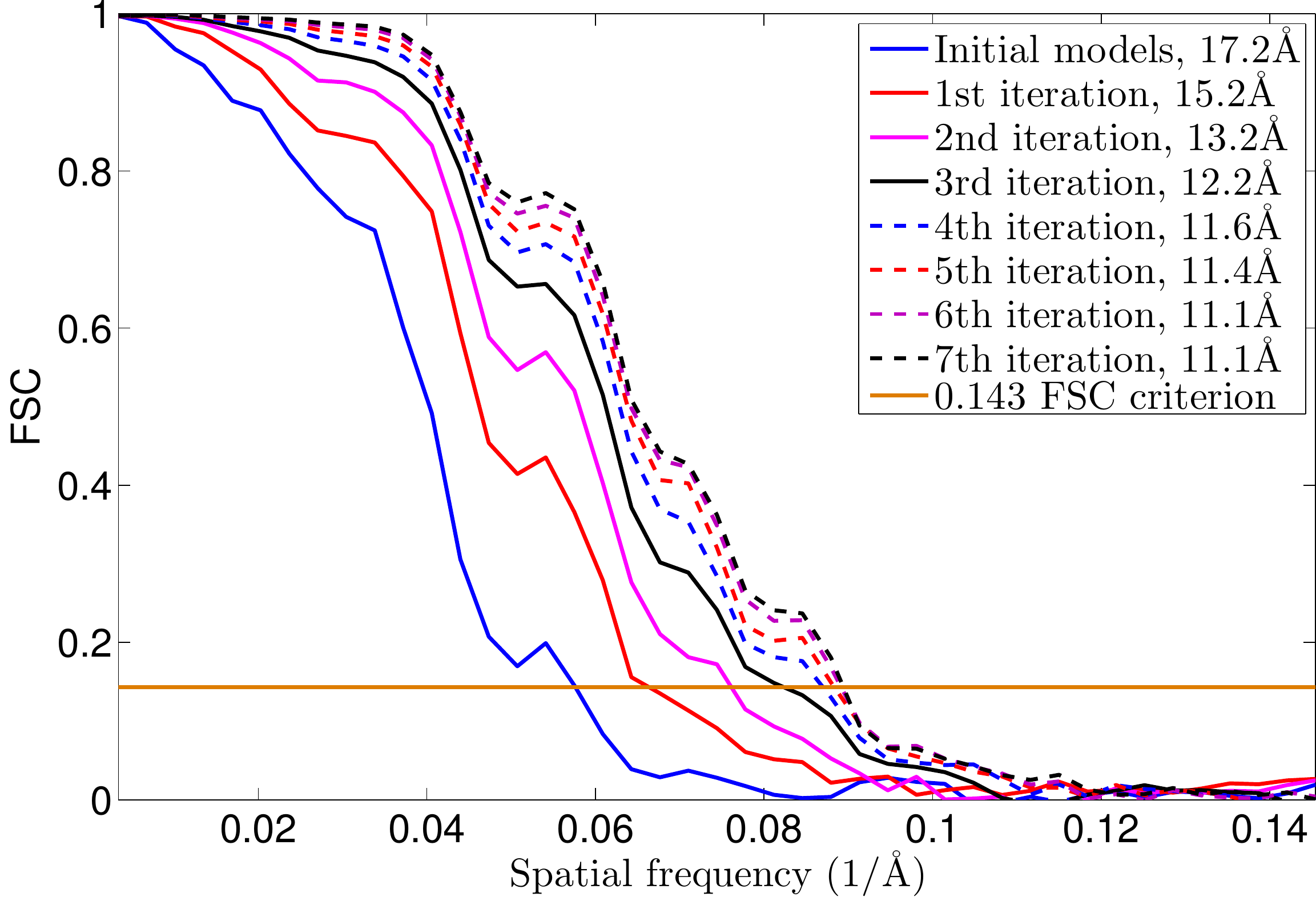}}\,
  \subfloat[][\label{fig:v}LUD, ADMM]{\includegraphics[width=.49\textwidth]{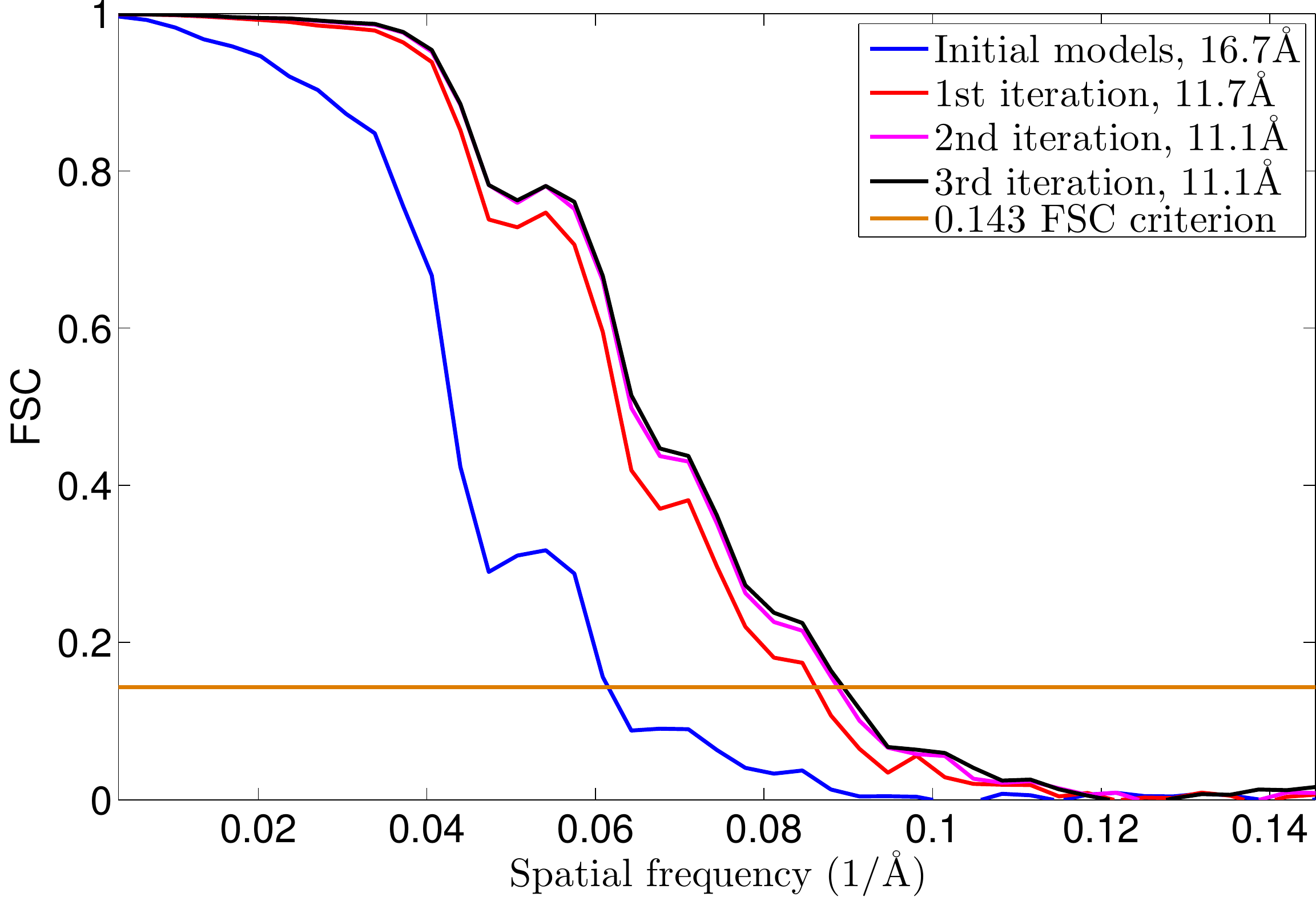}}\\
  \subfloat[][\label{fig:va_85}LS, ADMM, $\alpha = 0.85$]{\includegraphics[width=.49\textwidth]{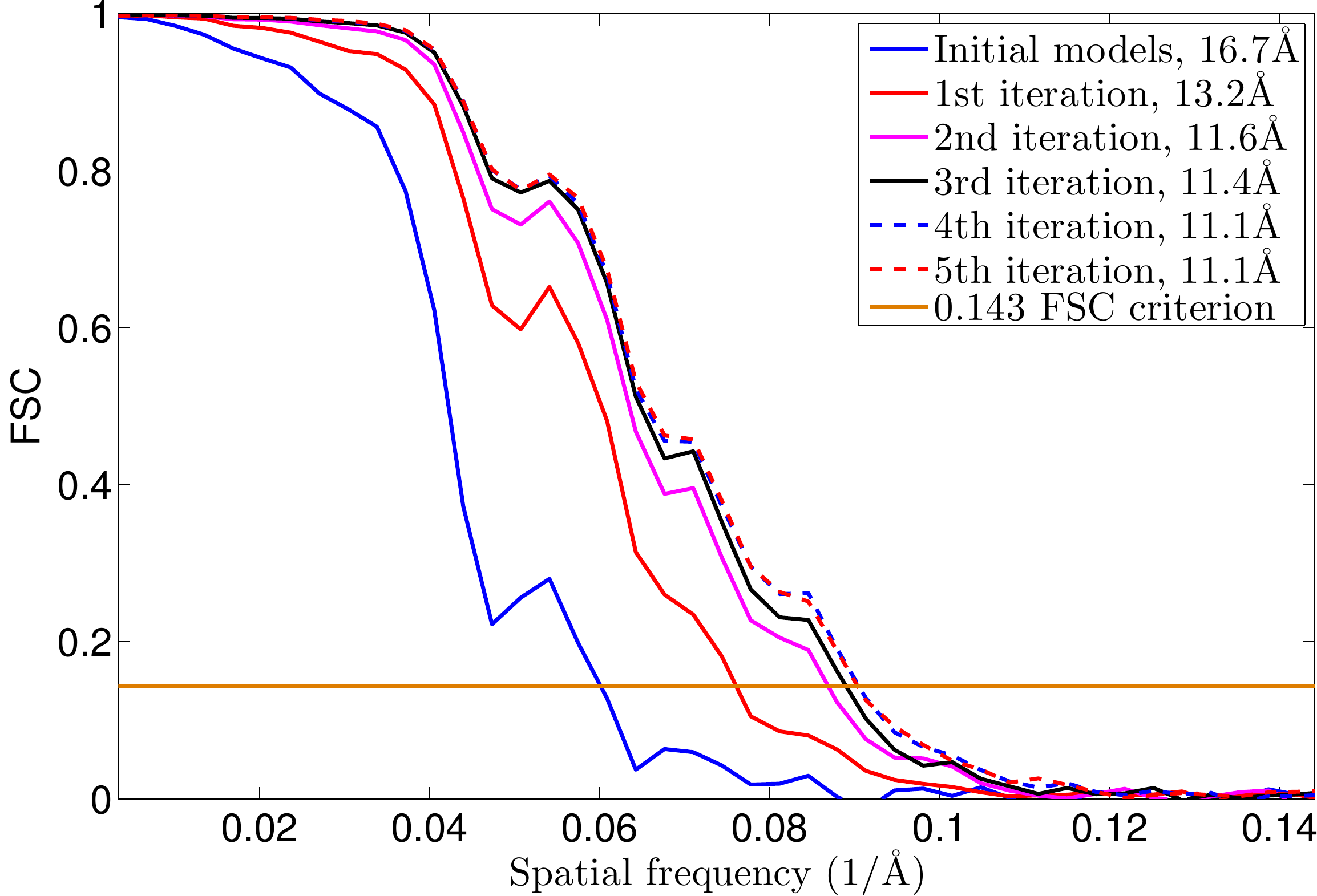}}\,
  \subfloat[][\label{fig:v_85}LUD, ADMM, $\alpha = 0.85$]{\includegraphics[width=.49\textwidth]{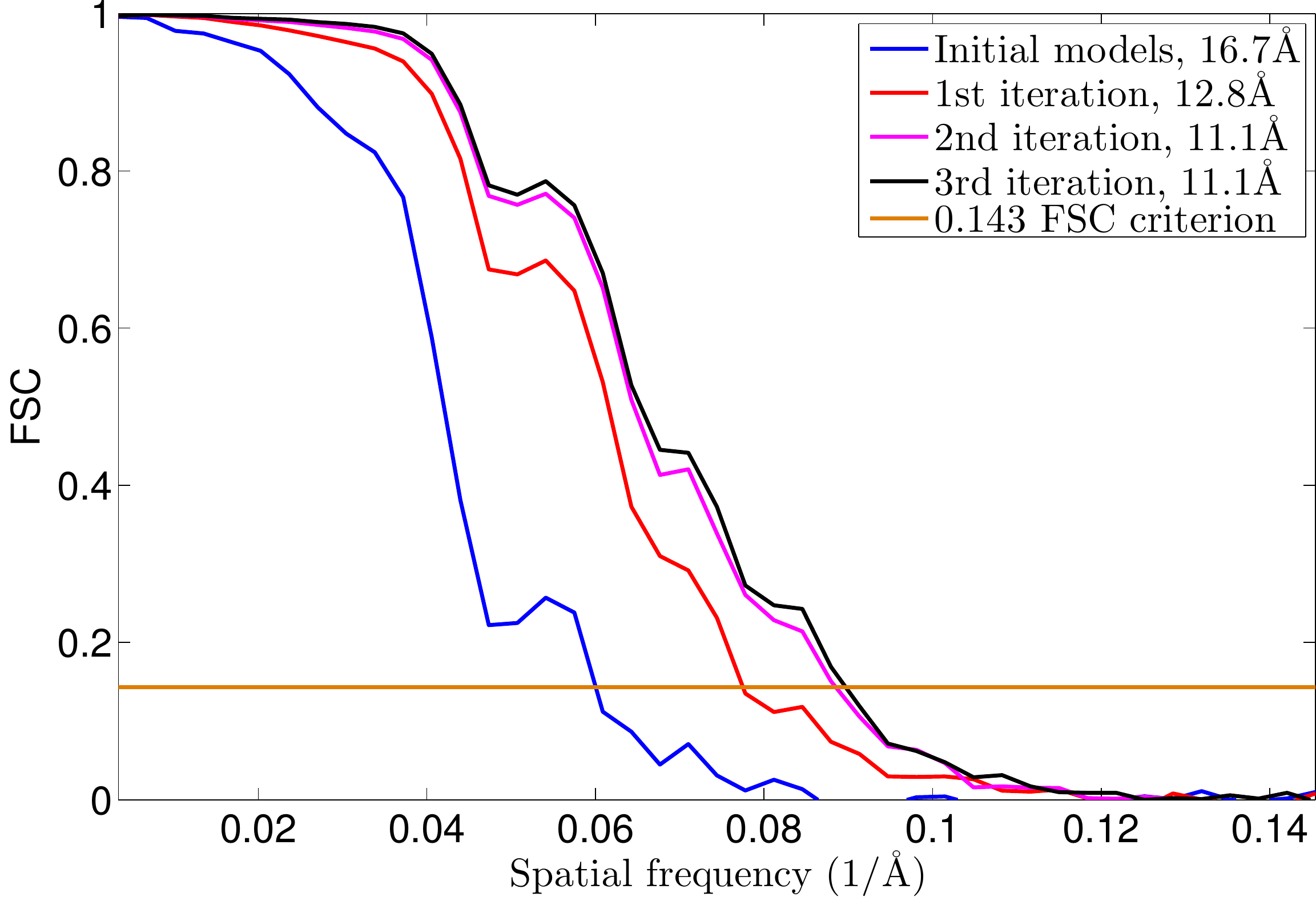}}\\
  \subfloat[][\label{fig:vb_85}LUD, IRLS, $\alpha = 0.85$]{\includegraphics[width=.49\textwidth]{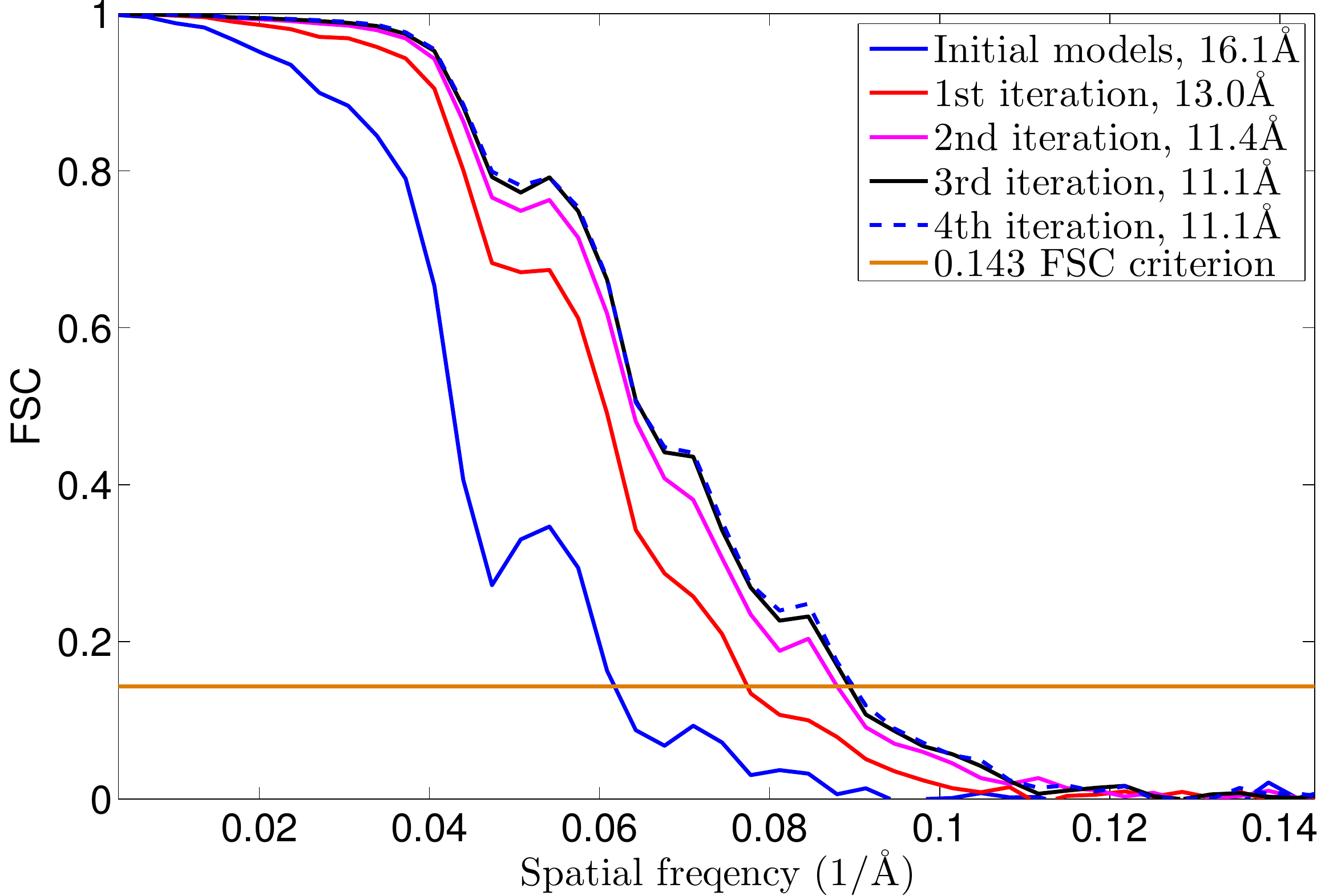}}\,
  \subfloat[][\label{fig:comparison}Comparison of refined models.]{\includegraphics[width=.49\textwidth]{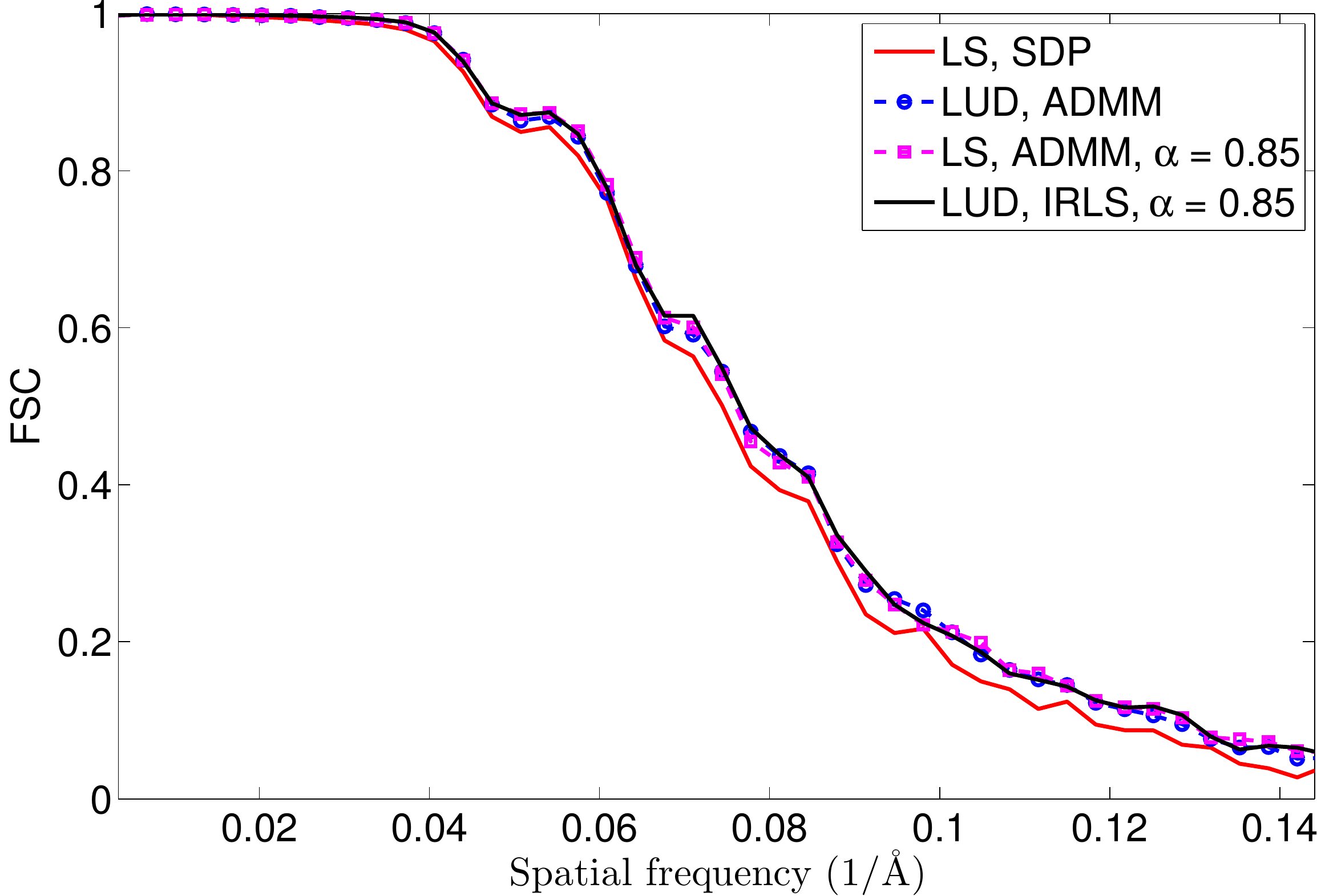}}
  \caption{Convergence of the refinement process. In sub-figure (a) - (e), the FSC plots show the convergence of the refinement iterations. The ab-initio models (Figure \ref{fig:initial_image} used in (a) - (e) were obtained by solving the LS/LUD problems using SDP/ADMM/IRLS. The numbers of refinement iterations performed in (a) - (e) are 7, 3, 5, 3 and 4 respectively. The sub-figure (f) are FSC plots of the refined models in (a), (b), (c) and (e) against the refined model in (d), which are measurements of similarities between the refined models in Figure \ref{fig:final_image}.}
  \label{fig:fsc_real}
\end{figure}
A set of micro-graphs of E. coli 50S ribosomal subunits was provided
by Dr. M. van Heel. These micro-graphs were acquired by a Philips
CM20 at defocus values between 1.37 and 2.06 $\mu m$, and they were
scanned at 3.36 \AA/pixel. The particles (particularly E. coli 50S ribosomal subunits) were picked using the automated
particle picking algorithm in EMAN Boxer \cite{EMAN}.  Then using the IMAGIC software package
(\cite{Stark2002, vanHeel199617}), the 27,121 particle images of size $90\times90$ pixels
were phase-flipped to remove the phase-reversals in the CTF,
bandpass filtered at 1/150 and 1/8.4 \AA, normalized by their variances, and
then translationally aligned with the rotationally-averaged total sum. The
particle images were randomly divided into 2 disjoint groups of equal number of images. The following steps were performed to each group separately. 

The images were rotationally aligned and averaged to produce “class averages” of better quality, following
the procedure detailed in \cite{zhao_class}. For each group, the images were
denoised and compressed using Fourier-Bessel based principal component analysis
(FBsPCA) \cite{fbs_pca}. Then, triple products of Fourier-Bessel expansion
coefficients obtained in FBsPCA were used to compute rotational invariant
features of the images, i.e, the bispectrums
\cite{Sadler:92,Kondor,Marabini:1996}. For each image, an initial set of neighboring images was
computed using the normalized cross-correlation of the bispectrums, which was
later refined using the method described in  \cite{Amit_classavg} to produce new sets of neighbors. 
Finally, for each image, we averaged it with its 10 nearest neighbors after
alignment. Three examples of averaged images are shown in Figure \ref{fig:images_real}.

One thousand class averages were randomly selected from each group. The LS  and
LUD approaches with and without the spectral norm constraint were applied. Two reconstructed volumes were obtained from
the two groups of images. The two resulting volumes were aligned and averaged to
obtain the ab-initio model (Figure \ref{fig:initial_image}). We observed that the
LUD approach gives much more reasonable ab initio models compared to the LS approach. In addition, the FSC of the two volumes was computed to
estimate the resolution of the ab-initio model (Figure \ref{fig:fsc_real}). Among all the ab-initio models, the one obtained by LS is at the lowest resolution 17.2\AA, while the one obtained by LUD through IRLS procedure is the highest resolution 16.1\AA.
Notice that the FSC measures the variance error, but not the bias error of the ab-initio
model. 
We also notice that the viewing directions of images estimated by the IRLS procedures without the spectral norm constraint converged to two clusters around two antipodal directions, resulting in no 3D reconstruction. Moreover, for this dataset, adding the spectral norm constraint on $G$ with $\alpha = 0.85$ did not improve the accuracy of the result, although this helped with regularizing the convergence in the IRLS procedure.

The two resulting volumes were then iteratively refined using 10,000 raw images in each group. In each refinement iteration, 2,000 template images were generated by projecting the 3D model from the previous iteration, then the orientations of the raw images were estimated using reference-template matching, and finally a new 3D model was reconstructed from the 10,000 raw images with highest correlation with the reference images. Each refinement iteration took about 4 hours. Therefore, a good ab-initio model should be able to accelerate the refinement process by reducing the total number of refinement iterations. The FSC plots in Figure \ref{fig:va} - \ref{fig:vb_85} show the convergence of the refinement process using different ab-initio models. We observed that all the refined models are at the resolution 11.1\AA. However, the worst ab-initio model obtained by LS needed $7$ iterations (about 28 hours) for convergence (Figure \ref{fig:va}), while the best ab-initio model obtained by LUD needed $3$ iterations (about 12 hours) for convergence (Figure \ref{fig:v} and Figure \ref{fig:v_85}). Figure \ref{fig:comparison} uses FSC plots to compare the refined models. We observed that the refined models in Figure \ref{fig:v} - \ref{fig:vb_85} were consistent to each other, while the refined model obtained by LS in Figure \ref{fig:va} was slightly different from others.

The average cost time for computing the ab-initio models in these two subsections are shown in
Table (\ref{tab:time}). It is not surprising to see that the LS approach was the fastest and that adding the spectral norm constraint slowed down the ADMM and IRLS procedures. The reason is that a large portion of the cost time in ADMM and IRLS is due to the projections onto the semidefinite cone. These steps are expected to be accelerated by the recent advance on eigenspace computation \cite{Zaiwen2013}. However, when using the LUD approach for the real data set, the time saved in the refinement was about  16 hours, which is much more than the time cost for computing the ab-initio models (about 0.5 - 1 hour when ADMM was used).

\begin{table}[h]
\center
\begin{tabular}{|c|c|c|c|c|c|c|}
\hline
 & \multicolumn{1}{c}{} & \multicolumn{1}{c}{$\alpha$ = N/A} & & \multicolumn{1}{c}{} & \multicolumn{1}{c}{$\frac{2}{3}\leq\alpha\leq1$} & \tabularnewline
\cline{2-7}
$K$ & LS & \multicolumn{1}{c}{LUD} &  & LS & \multicolumn{1}{c}{LUD} & \tabularnewline
\cline{3-4} \cline{6-7}
 & (SDP) & ADMM  & IRLS & (ADMM) & ADMM & IRLS\tabularnewline
\hline
$500$ & 7s & 266s & 469s & 78s & 454s & 3353s\tabularnewline
\hline
$1000$ & 31s & 1864s & 3913s & 619s & 1928s & 20918s\tabularnewline
\hline
\end{tabular}
\caption{\label{tab:time}The average cost time using different algorithms on $500$ and $1000$ images in the two experimental subsections. The notation $\alpha$ = N/A means no spectral norm constraint $\left\Vert G \right\Vert _2 \leq \alpha K$ is used.}
\end{table}

\section{Discussion}
\label{sec:discussion}

To estimate image orientations, we introduced a robust self consistency error and used ADMM or the IRLS procedure to solve the associated LUD problem after SDR. Numerical experiments demonstrate that the solution is less sensitive to outliers in  the detected common-lines than the LS method approach. In addition, when the common-line detection rate is low, the spectral norm constraint on the Gram matrix $G$ can help to tighten the semidefinite relaxation, and thus improves the accuracy of the estimated rotations in some cases. Moreover, the numerical experiments using the real data set (Section \ref{sec:real_data}) demonstrate that the ab-initio models resulted by the LUD based methods are more accurate than initial models that are resulted by least squares based methods. In particular, our initial models requires
fewer time-consuming refinement iterations. We note that it is also possible to consider other self consistency errors involving the unsquared deviations raised to some power $p$ (e.g., the cases $p=1, 2$ correspond to LUD and LS, respectively).  We observed that the accuracy of the estimated orientations can be improved by using $p<1$ provided that the initial guess is ``sufficiently good".
The LUD approach and the spectral norm constraint on $G$ can be generalized to the synchronization approach to estimate the images' orientations in \cite{sync_cryoem}.

In  \cite{LUD}, the LUD approach is shown to be more robust than the LS approach for the synchronization problem over the rotation group $SO(d)$. Given some relative rotations $R_i ^TR_j$, the synchronization problem is to estimate the rotations $R_i\in SO(d)$, $i = 1, \ldots, K$ up to a global rotation. It is verified that under a specific model of the measurement noise and the measurement graph for $R_i^TR_j$, the rotations can be exactly and stably recovered using LUD, exhibiting a phase transition behavior in terms of the proportion of noisy measurements. The problem of orientation determination using common-lines between cryo-EM images is similar to the synchronization problem. The difference is that the pairwise information given by the relative rotation $R_i^T R_j$ is full, while that given by the common-lines $\vec{c}_{ji}^T \vec{c}_{ij}$ is partial. Moreover, the measurement noise of each detected common-line $\vec{c}_{ij}$ depends on image $i$ and $j$, and thus it cannot be simply modeled, which brings the difficulties in verifying the conditions for the exact and stable orientation determination we observed.

\section{Acknowledgements}
The authors would like to thank Zhizhen Zhao for producing class averages from the experimental ribosomal
images. The work of L. Wang and A. Singer was partially supported by Award
Number FA9550-12-1-0317 from AFOSR, by Award Number R01GM090200 from the NIGMS,
by the Alfred P. Sloan Foundation, and by the Simons Foundation. The work of Z.
Wen was partially supported by NSFC grant 11101274.

\bibliographystyle{plain}
\bibliography{LUD_cryoem}

\begin{thebibliography}{10}

\bibitem{Arun:1987:LFT:28809.28821}
K.~S. Arun, T.~S. Huang, and S.~D. Blostein.
\newblock Least-{S}quares {F}itting of {T}wo 3-{D} {P}oint {S}ets.
\newblock {\em {IEEE} {T}rans. {P}attern {A}nal. {M}ach. {I}ntell.},
  9(5):698--700, May 1987.

\bibitem{FSC_cutoff_2}
X.~Bai, I.~S. Fernandez, G.~McMullan, and S.~HW Scheres.
\newblock Ribosome structures to near-atomic resolution from thirty thousand
  cryo-{EM} particles.
\newblock {\em e{L}ife {S}ciences}, 2, 2013.

\bibitem{Boyd}
S.~Boyd and L.~Vandenberghe.
\newblock {\em Convex {O}ptimization}.
\newblock Cambridge {U}niversity {P}ress, {N}ew {Y}ork, {NY, USA}, 2004.

\bibitem{Burer01anonlinear}
S.~Burer and R.~D.~C. Monteiro.
\newblock A {N}onlinear {P}rogramming {A}lgorithm for {S}olving {S}emidefinite
  {P}rograms via {L}ow-rank {F}actorization.
\newblock {\em Mathematical {P}rogramming (series {B})}, 95:2003, 2001.

\bibitem{CandesWakin-Boyd2008}
Emmanuel~J. Cand\`es, Michael~B. Wakin, and Stephen~P. Boyd.
\newblock Enhancing sparsity by reweighted $\ell_1$ minimization.
\newblock {\em Journal of Fourier Analysis and Applications}, 14:877--905,
  2008.

\bibitem{irls_d}
I.~Daubechies, R.~DeVore, M.~Fornasier, and C.~S. G\"{u}nt\"{u}rk.
\newblock Iteratively reweighted least squares minimization for sparse
  recovery.
\newblock {\em Communications on {P}ure and {A}pplied {M}athematics},
  63(1):1--38, 2010.

\bibitem{aspects_of_sdp}
E.~de~Klerk.
\newblock {\em Aspects of {S}emidefinite {P}rogramming: {I}nterior {P}oint
  {A}lgorithms and {S}elected {A}pplications}.
\newblock Applied Optimization. Springer, 2002.

\bibitem{soft_th}
D.~L. Donoho and I.~M. Johnstone.
\newblock {Adapting to Unknown Smoothness via Wavelet Shrinkage}.
\newblock {\em Journal of the {A}merican {S}tatistical {A}ssociation},
  90(432):1200+, December 1995.

\bibitem{dutt:1368}
A.~Dutt and V.~Rokhlin.
\newblock {F}ast {F}ourier {T}ransforms for {N}onequispaced {D}ata.
\newblock {\em SIAM Journal on Scientific Computing}, 14(6):1368--1393, 1993.

\bibitem{Farrow:92}
N.~A. Farrow and F.~P. Ottensmeyer.
\newblock A posteriori determination of relative projection directions of
  arbitrarily oriented macromolecules.
\newblock {\em J. Opt. Soc. Am. A}, 9(10):1749--1760, Oct 1992.

\bibitem{nufft_fessler}
J.~A. Fessler and B.~P. Sutton.
\newblock {N}onuniform fast {F}ourier transforms using min-max interpolation.
\newblock {\em IEEE Transactions on Signal Processing}, 51(2):560 -- 574, 2003.

\bibitem{Frank1996}
J.~Frank.
\newblock {\em {T}hree {D}imensional {E}lectron {M}icroscopy of
  {M}acromolecular {A}ssemblies}.
\newblock Academic Press, Inc., 1996.

\bibitem{ribo_cryo-em}
J.~Frank.
\newblock Cryo-electron microscopy as an investigative tool: the ribosome as an
  example.
\newblock {\em Bio{E}ssays}, 23(8):725--732, 2001.

\bibitem{GW_SDP}
M.~X. Goemans and D.P. Williamson.
\newblock Improved {A}pproximation {A}lgorithms for {M}aximum {C}ut and
  {S}atisfiability {P}roblems {U}sing {S}emidefinite {P}rogramming.
\newblock {\em Journal of the {ACM}}, 42:1115--1145, 1995.

\bibitem{greengard:443}
L.~Greengard and J.~Lee.
\newblock Accelerating the {N}onuniform {F}ast {F}ourier {T}ransform.
\newblock {\em SIAM Review}, 46(3):443--454, 2004.

\bibitem{HongLuo2012}
M.~Hong and Z.-Q. Luo.
\newblock {On the Linear Convergence of the Alternating Direction Method of
  Multipliers}.
\newblock {\em ArXiv e-prints}, August 2012.

\bibitem{Kondor}
R.~I. Kondor.
\newblock A complete set of rotationally and translationally invariant features
  for images.
\newblock {\em CoRR}, abs/cs/0701127, 2007.

\bibitem{Multivariatestat}
L.~Lebart, A.~Morineau, and K.~M. Warwick.
\newblock {\em Multivariate descriptive statistical analysis: correspondence
  analysis and related techniques for large matrices}.
\newblock Wiley series in probability and mathematical statistics: {A}pplied
  probability and statistics. Wiley, 1984.

\bibitem{irls_t}
G.~Lerman, M.~McCoy, J.~A. Tropp, and T.~Zhang.
\newblock Robust computation of linear models, or {H}ow to find a needle in a
  haystack.
\newblock \url{arXiv:1202.4044v1} [cs.IT], 2012.

\bibitem{EMAN}
S.J. Ludtke, P.~R. Baldwin, and W.~Chiu.
\newblock {EMAN: S}emiautomated {S}oftware for {H}igh-{R}esolution
  {S}ingle-{P}article {R}econstructions.
\newblock {\em Journal of Structural Biology}, 128(1):82 -- 97, 1999.

\bibitem{SDR}
Z.~Luo, W.~Ma, A.~So, Y.~Ye, and S.~Zhang.
\newblock {S}emidefinite {R}elaxation of {Q}uadratic {O}ptimization {P}roblems.
\newblock {\em {IEEE} {S}ignal {P}rocessing {M}agazine}, 27(3):20--34, may
  2010.

\bibitem{Mallick_structure}
S.P. Mallick, S.~Agarwal, D.J. Kriegman, S.J. Belongie, B.~Carragher, and C.S.
  Potter.
\newblock Structure and view estimation for tomographic reconstruction: A
  bayesian approach.
\newblock In {\em Computer Vision and Pattern Recognition, 2006 IEEE Computer
  Society Conference on}, volume~2, pages 2253--2260, 2006.

\bibitem{Marabini:1996}
R.~Marabini and J.~M. Carazo.
\newblock On a new computationally fast image invariant based on bispectral
  projections.
\newblock {\em Pattern Recogn. Lett.}, 17(9):959--967, 1996.

\bibitem{Mezzadri:2007}
F.~Mezzadri.
\newblock {How to generate random matrices from the classical compact groups}.
\newblock {\em Notices of the {AMS}}, 54:592--604, 2007.

\bibitem{Mohan2012}
K.~Mohan and M.~Fazel.
\newblock Iterative reweighted algorithms for matrix rank minimization.
\newblock {\em Journal of {M}achine {L}earning {R}esearch}, 13:3441--3473,
  2012.

\bibitem{natterer_fourier_proj_slice}
F.~Natterer.
\newblock {\em The {M}athematics of {C}omputerized {T}omography}.
\newblock Classics in Appl. Math. 32. SIAM, Philadelphia, 2001.

\bibitem{L1_Nyquist}
H.~Nyquist.
\newblock Least orthogonal absolute deviations.
\newblock {\em Computational Statistics \& Data Analysis}, 6(4):361--367, June
  1988.

\bibitem{Refinement}
P.~Penczek, R.~Grassucci, and J.~Frank.
\newblock The ribosome at improved resolution: {N}ew techniques for merging and
  orientation refinement in 3{D} cryo-electron microscopy of biological
  particles.
\newblock {\em Ultramicroscopy}, 53(3):251 -- 270, 1994.

\bibitem{GWBP_Penczek1992}
P.~Penczek, M.~Radermacher, and J.~Frank.
\newblock Three-dimensional reconstruction of single particles embedded in ice.
\newblock {\em Ultramicroscopy}, 40(1):33--53, 1992.

\bibitem{Penczek1994251}
P.~A. Penczek, R.~A. Grassucci, and J.~Frank.
\newblock The ribosome at improved resolution: New techniques for merging and
  orientation refinement in 3d cryo-electron microscopy of biological
  particles.
\newblock {\em Ultramicroscopy}, 53(3):251 -- 270, 1994.

\bibitem{Penczek1996205}
P.~A. Penczek, J.~Zhu, and J.~Frank.
\newblock A common-lines based method for determining orientations for {N} $>
  3$ particle projections simultaneously.
\newblock {\em Ultramicroscopy}, 63(3-4):205 -- 218, 1996.

\bibitem{Chimera}
E.~F. Pettersen, T.~D. Goddard, C.~C. Huang, G.~S. Couch, D.~M. Greenblatt,
  E.~C. Meng, and T.~E. Ferrin.
\newblock {UCSF Chimera - A visualization system for exploratory research and
  analysis}.
\newblock {\em Journal of Computational Chemistry}, 25:1605--1612, 2004.

\bibitem{GWBP_radermacher}
M.~Radermacher, T.~Wagenknecht, A.~Verschoor, and J.~Frank.
\newblock A new 3-{D} reconstruction scheme applied to the 50{S} ribosomal
  subunit of {E}. coli.
\newblock {\em Ultramicroscopy}, 141:RP1--2, 1986.

\bibitem{Recht2010}
B.~Recht, M.~Fazel, and P.~A. Parrilo.
\newblock Guaranteed {M}inimum-{R}ank {S}olutions of {L}inear {M}atrix
  {E}quations via {N}uclear {N}orm {M}inimization.
\newblock {\em {SIAM R}ev.}, 52(3):471--501, August 2010.

\bibitem{Sadler:92}
B.~M. Sadler and G.~B. Giannakis.
\newblock Shift- and rotation-invariant object reconstruction using the
  bispectrum.
\newblock {\em J. Opt. Soc. Am. A}, 9(1):57--69, Jan 1992.

\bibitem{FSC}
W.~O. Saxton and W.~Baumeister.
\newblock The correlation averaging of a regularly arranged bacterial cell
  envelope protein.
\newblock {\em Journal of Microscopy}, 127(2):127--138, 1982.

\bibitem{FSC_cutoff_1}
S.~HW Scheres and S.~Chen.
\newblock Prevention of overfitting in cryo-{EM} structure determination.
\newblock {\em Nat {M}eth}, 9:853--854, 2012.

\bibitem{Serysheva}
I.~I. Serysheva, E.~V. Orlova, W.~Chiu, M.~B. Sherman, S.~L. Hamilton, and
  M.~van Heel.
\newblock Electron cryomicroscopy and angular reconstitution used to visualize
  the skeletal muscle calcium release channel.
\newblock {\em Nat Struct Mol Biol}, 2:18--24, 1995.

\bibitem{sync_cryoem}
Y.~Shkolnisky and A.~Singer.
\newblock Viewing direction estimation in cryo-{EM} using synchronization.
\newblock {\em SIAM Journal on Imaging Sciences}, 5(3):1088--1110, 2012.

\bibitem{Amit_voting}
A.~Singer, R.~R. Coifman, F.~J. Sigworth, D.~W. Chester, and Y.~Shkolnisky.
\newblock Detecting consistent common lines in cryo-{EM} by voting.
\newblock {\em Journal of Structural Biology}, 169(3):312--322, 2010.

\bibitem{Amit_eig_sdp}
A.~Singer and Y.~Shkolnisky.
\newblock {T}hree-{D}imensional {S}tructure {D}etermination from {C}ommon
  {L}ines in {C}ryo-{EM} by {E}igenvectors and {S}emidefinite {P}rogramming.
\newblock {\em SIAM Journal on Imaging Sciences}, 4(2):543--572, 2011.

\bibitem{Amit_classavg}
A.~Singer, Z.~Zhao, Y.~Shkolnisky, and R.~Hadani.
\newblock Viewing {A}ngle {C}lassification of {C}ryo-{E}lectron {M}icroscopy
  {I}mages {U}sing {E}igenvectors.
\newblock {\em SIAM Journal on Imaging Sciences}, 4(2):723--759, 2011.

\bibitem{SDP}
A.~So, J.~Zhang, and Y.~Ye.
\newblock On approximating complex quadratic optimization problems via
  semidefinite programming relaxations.
\newblock {\em Math. {P}rogram.}, 110(1):93--110, March 2007.

\bibitem{L1_spath_watson}
H.~Sp\"{a}th and G.~A. Watson.
\newblock On orthogonal linear approximation.
\newblock {\em Numer. {M}ath.}, 51(5):531--543, October 1987.

\bibitem{Stark2002}
H.~Stark, M.~V. Rodnina, H.~Wieden, F.~Zemlin, W.~Wintermeyer, and M.~van Heel.
\newblock Ribosome interactions of aminoacyl-t{RNA} and elongation factor {T}u
  in the codon-recognition complex.
\newblock {\em Nat {S}truct {M}ol {B}iol}, 9:849--854, 2002.

\bibitem{VainshteinGoncharov1986}
B.~Vainshtein and A.~Goncharov.
\newblock Determination of the spatial orientation of arbitrarily arranged
  identical particles of an unknown structure from their projections.
\newblock In {\em Proc. llth Intern. Congr. on Elec. Mirco.}, pages 459--460,
  1986.

\bibitem{vanHeel1984165}
M.~van Heel.
\newblock Multivariate statistical classification of noisy images (randomly
  oriented biological macromolecules).
\newblock {\em Ultramicroscopy}, 13(1-2):165 -- 183, 1984.

\bibitem{VanHeel1987111}
M.~van Heel.
\newblock Angular reconstitution: {A} posteriori assignment of projection
  directions for 3{D} reconstruction.
\newblock {\em Ultramicroscopy}, 21(2):111 -- 123, 1987.

\bibitem{vanHeel1981187}
M.~van Heel and J.~Frank.
\newblock Use of multivariates statistics in analysing the images of biological
  macromolecules.
\newblock {\em Ultramicroscopy}, 6(1):187 -- 194, 1981.

\bibitem{vanheel_cryo-em}
M.~van Heel, B.~Gowen, R.~Matadeen, E.~V. Orlova, R.~Finn, T.~Pape, D.~Cohen,
  H.~Stark, R.~Schmidt, M.~Schatz, and A.~Patwardhan.
\newblock Single-particle electron cryo-microscopy: towards atomic resolution.
\newblock {\em Quarterly Reviews of Biophysics}, 33(04):307--369, 2000.

\bibitem{vanHeel199617}
M.~van Heel, G.~Harauz, E.~V. Orlova, R.~Schmidt, and M.~Schatz.
\newblock A new generation of the imagic image processing system.
\newblock {\em Journal of Structural Biology}, 116(1):17 -- 24, 1996.

\bibitem{FIRM}
C.~Vonesch, Lanhui Wang, Y.~Shkolnisky, and A.~Singer.
\newblock Fast wavelet-based single-particle reconstruction in {C}ryo-{EM}.
\newblock In {\em Biomedical Imaging: From Nano to Macro, 2011 IEEE
  International Symposium on}, pages 1950 --1953, 2011.

\bibitem{Fred_resolution}
L.~Wang and F.~J. Sigworth.
\newblock {Cryo-EM and single particles.}
\newblock {\em Physiology (Bethesda)}, 21:13--18, 2006.

\bibitem{LUD}
L.~Wang and A.~Singer.
\newblock Exact and stable recovery of rotations for robust synchronization,
  2012.
\newblock submitted. Also availabe at \url{http://arxiv.org/abs/1211.2441}.

\bibitem{ADM}
Z.~Wen, D.~Goldfarb, and W.~Yin.
\newblock Alternating direction augmented {L}agrangian methods for semidefinite
  programming.
\newblock {\em Mathematical {P}rogramming {C}omputation}, 2:203--230, 2010.

\bibitem{Zaiwen2013}
Z.~Wen, C.~Yang, X.~Liu, and Y.~Zhang.
\newblock Trace-{P}enalty {M}inimization for {L}arge-scale {E}igenspace
  {C}omputation.
\newblock {\em Optimization {O}nline}, 2013.

\bibitem{fbs_pca}
Z.~Zhao and A.~Singer.
\newblock Fourier-{B}essel rotational invariant eigenimages, 2012.
\newblock Submitted. Also available at \url{http://arxiv.org/abs/1211.1968}.

\bibitem{zhao_class}
Z.~Zhao and A.~Singer.
\newblock Rotationally {I}nvariant {I}mage {R}epresentation for {V}iewing
  {A}ngle {C}lassification, 2013.
\newblock In preparation.

\bibitem{Zhu2010}
Z.~Zhu, A.~So, and Y.~Ye.
\newblock Universal {R}igidity and {E}dge {S}parsification for
  {S}ensor{N}etwork {L}ocalization.
\newblock {\em SIAM Journal on Optimization}, 20(6):3059--3081, 2010.

\end{thebibliography}

\Appendix
\section{Exact recovery of the Gram matrix $G$ from correct common-lines}
\label{sec:exact_G}
Here we prove that if the detected common-lines $\vec{c}_{ji}$ (defined in (\ref{eq:common-line-notation})) are all correct and  at least three images have linearly independent projection directions (i.e., the viewing directions of the three images are not on the same great circle on the sphere shown in Figure \ref{fig:alpha}), then the Gram matrix $G$ obtained by solving the LS problem (\ref{eq:sdp})-(\ref{eq:sdp_constraint}) or the LUD problem (\ref{eq:lad-exact_sdr_2}) is uniquely the one defined in (\ref{eq:G}). To verify the uniqueness of the solution $G$, it is enough to show rank($G$)$= 3$ due to the SDP solution uniqueness theorem (page 36-39 in \cite{aspects_of_sdp}, \cite{Zhu2010}). Without loss of generality, we consider the SDP for the LS approach when applied on three images (i.e., $K = 3$ and $w_{ij} = 1$ in the problem (\ref{eq:sdp}) - (\ref{eq:sdp_constraint})):
\[
\max_{G_{6\times6}\succcurlyeq0}\sum_{i,j=1,2,3}\left\langle G_{ij},\vec{c}_{ji}^{T}\vec{c}_{ij}\right\rangle \text{ s.t. }G_{ii}=I_{2},
\]
Since the solution $G$ is positive semidefinite, we can decompose $G$ as
\[
G=\left(\begin{array}{c}
\mathbf{u}_{1}^{1^{T}}\\
\mathbf{u}_{1}^{2^{T}}\\
\mathbf{u}_{2}^{1^{T}}\\
\mathbf{u}_{2}^{2^{T}}\\
\mathbf{u}_{3}^{1^{T}}\\
\mathbf{u}_{3}^{2^{T}}
\end{array}\right)\left(\begin{array}{cccccc}
\mathbf{u}_{1}^{1} & \mathbf{u}_{1}^{2} & \mathbf{u}_{2}^{1} & \mathbf{u}_{2}^{2} & \mathbf{u}_{3}^{1} & \mathbf{u}_{3}^{2}\end{array}\right),
\]
where ${\bf u}^p_i$, $p = 1, 2$ and $i = 1, 2, 3$ are column vectors.
We will show rank($G$)$=3$, i.e., any four vectors among $\left\{ \mathbf{u}_{1}^{1},\mathbf{u}_{1}^{2},\mathbf{u}_{2}^{1},\mathbf{u}_{2}^{2},\mathbf{u}_{3}^{1},\mathbf{u}_{3}^{2}\right\} $
span a space with dimensionality at most $3$.

Define arrays ${\bf u}_i$ as
\[
\mathbf{u}_{i}=(\mathbf{u}_{i}^{1},\mathbf{u}_{i}^{2}),
\]
then the inner product
\begin{eqnarray*}
\left\langle G_{ij},\vec{c}_{ji}^{T}\vec{c}_{ij}\right\rangle  & = & \left\langle \mathbf{u}_{i}^{T}\mathbf{u}_{j},\vec{c}_{ji}^{T}\vec{c}_{ij}\right\rangle \\
 & = & \left\langle \vec{c}_{ji}\mathbf{u}_{i},\vec{c}_{ij}\mathbf{u}_{j}\right\rangle \\
 & = & \left\langle c_{ji}^{1}\mathbf{u}_{i}^{1}+c_{ji}^{2}\mathbf{u}_{i}^{2},c_{ij}^{1}\mathbf{u}_{j}^{1}+c_{ij}^{2}\mathbf{u}_{j}^{2}\right\rangle \\
 & \leq & 1,
\end{eqnarray*}
where the last inequality follows the Cauchy-Schwarz inequality and the facts that all $\vec{c}_{ij}$
are unit vectors, $\mathbf{u}_{i}^{1}$ and $\mathbf{u}_{i}^{2}$
are unit vectors and orthogonal to each other due to the constraint
$G_{ii}=I_{2}$, and thus all $c_{ij}^{1}\mathbf{u}_{j}^{1}+c_{ij}^{2}\mathbf{u}_{j}^{2}$
are unit vectors on the Fourier slices of the images. The equality holds if and only if
\begin{equation}
c_{ji}^{1}\mathbf{u}_{i}^{1}+c_{ji}^{2}\mathbf{u}_{i}^{2}=c_{ij}^{1}\mathbf{u}_{j}^{1}+c_{ij}^{2}\mathbf{u}_{j}^{2}.\label{eq:1}
\end{equation}
Thus when the maximum is achieved, due to (\ref{eq:1}) and the fact that the projection directions of the images are linearly independent, dim(span\{$\mathbf{u}_{i}^{1},\mathbf{u}_{i}^{2}$\}$\cap$span\{$\mathbf{u}_{j}^{1},\mathbf{u}_{j}^{2}$\})$=1$
and thus dim(span\{$\mathbf{u}_{i}^{1},\mathbf{u}_{i}^{2},\mathbf{u}_{j}^{1},\mathbf{u}_{j}^{2}$\})$=3$.
Therefore, without loss of generality, we only have to show that dim(span\{$\mathbf{u}_{1}^{1},\mathbf{u}_{2}^{1},\mathbf{u}_{3}^{1},\mathbf{u}_{3}^{2}$\})$\leq3$.
Using (\ref{eq:1}), assume that span\{$\mathbf{u}_{1}^{1},\mathbf{u}_{1}^{2}$\}$\cap$span\{$\mathbf{u}_{3}^{1},\mathbf{u}_{3}^{2}$\}=span\{$\mathbf{v}_{1}$\}
and span\{$\mathbf{u}_{2}^{1},\mathbf{u}_{2}^{2}$\}$ \cap $span\{$\mathbf{u}_{3}^{1},\mathbf{u}_{3}^{2}$\} = span\{$\mathbf{v}_{2}$\},
 where $\mathbf{v}_{1}$ and $\mathbf{v}_{2}$ are linearly independent vectors
(otherwise all three projection directions are linearly dependent and thus the 3 Fourier slices of the images intersect at the same line). Therefore we
have span\{$\mathbf{v}_{1},\mathbf{v}_{2}$\}=span\{$\mathbf{u}_{3}^{1},\mathbf{u}_{3}^{2}$\},
span\{$\mathbf{v}_{1},\mathbf{u}_{1}^{1}$\}$\subseteq$span\{$\mathbf{u}_{1}^{1},\mathbf{u}_{1}^{2}$\}
and span\{$\mathbf{v}_{2},\mathbf{u}_{2}^{1}$\}$\subseteq$span\{$\mathbf{u}_{2}^{1},\mathbf{u}_{2}^{2}$\}.
Thus dim(span\{$\mathbf{u}_{1}^{1},\mathbf{u}_{2}^{1},\mathbf{u}_{3}^{1},\mathbf{u}_{3}^{2}$\})
= dim(span\{$\mathbf{u}_{1}^{1},\mathbf{u}_{2}^{1},$ $\mathbf{v}_{1},\mathbf{v}_{2}$\})
$\leq$ dim(span\{$\mathbf{u}_{1}^{1},\mathbf{u}_{1}^{2},\mathbf{u}_{2}^{1},\mathbf{u}_{2}^{2}$\})
$\leq3$.


\end{document}